\documentclass[multiauthors, onecolumn]{LaTeXclass/cohere}

\usepackage{amsmath,amsfonts,bm}
\usepackage{amsthm}
\newtheoremstyle{coheretheorem}
  {}{}{\itshape}{}{\bfseries}{.}{ }
  {\textcolor{DarkBlue}{\thmname{#1}\thmnumber{ #2}}\thmnote{ (#3)}}
\theoremstyle{coheretheorem}
\newtheorem{theorem}{Theorem}
\newtheorem{proposition}{Proposition}
\newtheorem{corollary}{Corollary}

\def\eqref#1{equation~\ref{#1}}

\def\1{\bm{1}}

\DeclareMathAlphabet{\mathsfit}{\encodingdefault}{\sfdefault}{m}{sl}
\SetMathAlphabet{\mathsfit}{bold}{\encodingdefault}{\sfdefault}{bx}{n}

\newcommand{\E}{\mathbb{E}}

\DeclareMathOperator*{\argmax}{arg\,max}

\usepackage{hyperref}
\usepackage{url}
\usepackage[dvipsnames]{xcolor}
\usepackage{algorithm}
\usepackage{algpseudocode}
\usepackage{wrapfig}
\usepackage{placeins}

\newcommand{\softrlvr}{\textsc{Soft-RLVR}}
\newcommand{\softsverl}{\textsc{Soft-SVeRL}}
\newcommand{\ifeval}{\textsc{IFEval}}

\newcommand{\genhl}[1]{%
  \begingroup
  \setlength{\fboxsep}{1pt}%
  \colorbox{NavyBlue!12}{\strut #1}%
  \endgroup
}
\newcommand{\verhl}[1]{%
  \begingroup
  \setlength{\fboxsep}{1pt}%
  \colorbox{ForestGreen!12}{\strut #1}%
  \endgroup
}
\newcommand{\parthl}[1]{%
  \begingroup
  \setlength{\fboxsep}{1pt}%
  \colorbox{BrickRed!12}{\strut #1}%
  \endgroup
}

\title{Soft-SVeRL: Self-Verified Reinforcement Learning with Soft Rewards}
\author{name={Saurabh Dash},affiliation={1}}
\author{name={Pierre Clavier},affiliation={2}}
\author{name={John Dang\lone},affiliation={1}}
\author{name={Matthias Galle\ltwo},affiliation={1}}
\author{name={Marzieh Fadaee},affiliation={1}}
\author{name={Ahmet Üstün},affiliation={2}}
\author{name={Beyza Ermis},affiliation={1}}

\affiliations{
    \item[1] Cohere Labs
    \item[2] Cohere
}
\corresponding[*]{\{saurabh,beyza\}@cohere.com}

\abstract{Reinforcement Learning from Verifiable Rewards (RLVR) has improved language models in domains such as mathematics and code, where correctness can be checked automatically. However, many important tasks are only partially verifiable: prompts contain multiple requirements, responses may satisfy some but not all of them, or no single reference answer might exist. 
We introduce \softrlvr, a framework for reinforcement learning from decomposed, learned verification signals. \softrlvr{} converts each prompt into a checklist of atomic requirements, scores candidate responses item by item with an LLM verifier, and trains on the resulting soft reward. 
Checklist-based rewards turn sparse pass/fail supervision into a denser partial-credit signal, but they also introduce a tradeoff: averaging item-level judgments can reduce verifier noise, while partial credit can reward incomplete responses. 
We formalize this tradeoff and identify conditions under which checklist-based verification gives a more reliable RL training signal than holistic verification. 
We further introduce \softsverl{}, a self-verifying variant of \softrlvr{} in which the policy also acts as the verifier. We show that self-verification is prone to reward inflation from overly permissive self-judgments, and that explicit stabilization is needed to prevent this collapse.
In a controlled instruction-following setting with rule-based ground-truth evaluation, checklist-based \softrlvr{} improves \ifeval{} by up to 11.1 points using only learned verifier rewards. Our experiments further show that verifier quality and checklist quality both affect downstream RL outcomes, and that explicit stabilization is essential for effective self-verification.
}

\begin{document}

\section{Introduction}

Reinforcement learning (RL) has become a powerful tool for improving language models when reliable reward signals are available. In domains such as mathematics~\citep{grpo,deepseekr1} and code generation~\citep{deepseekr1,wang2025coevolving}, where correctness can be verified accurately through symbolic checks or unit tests, Reinforcement Learning from Verifiable Rewards (RLVR) has led to dramatic gains in reasoning performance~\citep{deepseekr1,grpo}. However, these settings represent only a narrow subset of the real-world tasks for which we want to improve language models.

Many important applications do not allow exact verification. 
Complex instruction following is a central example: prompts often specify multiple requirements, such as formatting, keyword inclusion or exclusion, semantic manipulation, length limits, and answering a substantive question. A response may satisfy most requirements while failing one, and there is often no exact reference answer~\citep{zhou2023instructionfollowingevaluationlargelanguage}. 
Binary all-or-nothing rewards are therefore too sparse, while single holistic judgments compress many requirements into one noisy decision and do not reveal which requirements were satisfied or violated~\citep{bai2022constitutional,lee2023rlaif,gunjal2025rubrics}. This raises a central question: \textit{how can we train language models with RL when correctness is partial, noisy, and difficult to specify?}

We address this challenge with \softrlvr{}, a framework for reinforcement learning with learned checklist-based rewards. 
Our key observation is that many instruction-following tasks have an approximately decomposable reward structure: correctness arises from satisfying a set of identifiable requirements that can often be evaluated separately. 
\softrlvr{} makes this structure explicit by converting prompts into checklists of verifiable criteria, evaluating each checklist item with an LLM verifier, and computing reward as the fraction of satisfied items. 
This turns sparse pass/fail supervision into a graded signal that captures partial progress while keeping the reward tied to explicit criteria.

A key concern is that learned verifiers are imperfect: they may incorrectly approve unsatisfactory responses or reject valid ones. 
We show that checklist decomposition fundamentally reshapes the impact of this noise. 
While individual item-level judgments may be noisy, aggregating them can reduce reward variance, especially when verifier errors are not strongly correlated. At the same time, partial credit changes the training target by assigning non-zero reward to incomplete responses. 
We formalize this tradeoff and identify conditions under which checklist verification provides a more accurate estimate of the ideal RL update than holistic judgments.

We further introduce \softsverl{}, a self-verifying variant of \softrlvr{} where the policy model also serves as the verifier.
This removes the need for a separate online reward model and can reduce the inference footprint of RL training. 
However, it introduces a collapse mode in which the shared model increases measured reward by making the verifier more permissive rather than by generating better responses. We call this \emph{always-yes collapse}. 
To mitigate this, we introduce verifier co-training with gold and replayed examples, along with an anti-collapse penalty that discourages uniform inflation of item-level acceptance and stabilizes training.

We evaluate our approach in a controlled instruction-following setting based on \ifeval{}-style verifiable constraints~\citep{zhou2023instructionfollowingevaluationlargelanguage}. 
This setting provides rule-based ground-truth verification for evaluation and analysis, while training uses learned checklist-based rewards rather than the rule-based verifier. Our experiments show that checklist-based soft verification substantially improves instruction following, that verifier and checklist quality both affect downstream RL outcomes, and that self-verification requires explicit stabilization.

To summarize, our work makes the following key contributions:
\begin{itemize}
\item We introduce \softrlvr, a framework for reinforcement learning with decomposed LLM-based verifiers for partially verifiable instruction-following tasks, turning all-or-nothing pass/fail supervision into graded item-level feedback.
\item We characterize when averaging item-level verifier judgments yields a more reliable RL training signal than a single holistic judgment, and when partial-credit bias becomes harmful.
\item We introduce \softsverl{}, a self-verifying variant of \softrlvr{}, identify \emph{always-yes collapse}, and show that verifier co-training with gold and replay examples in addition to an anti-collapse penalty reduces reward inflation.
\item We demonstrate empirically in an \ifeval{}-style setting with rule-based ground-truth evaluation that learned checklist rewards substantially improve instruction following and that both verifier quality and checklist quality critically affect downstream RL performance.
\end{itemize}

\begin{figure}[!t]
    \centering
    \includegraphics[width=0.90\textwidth]{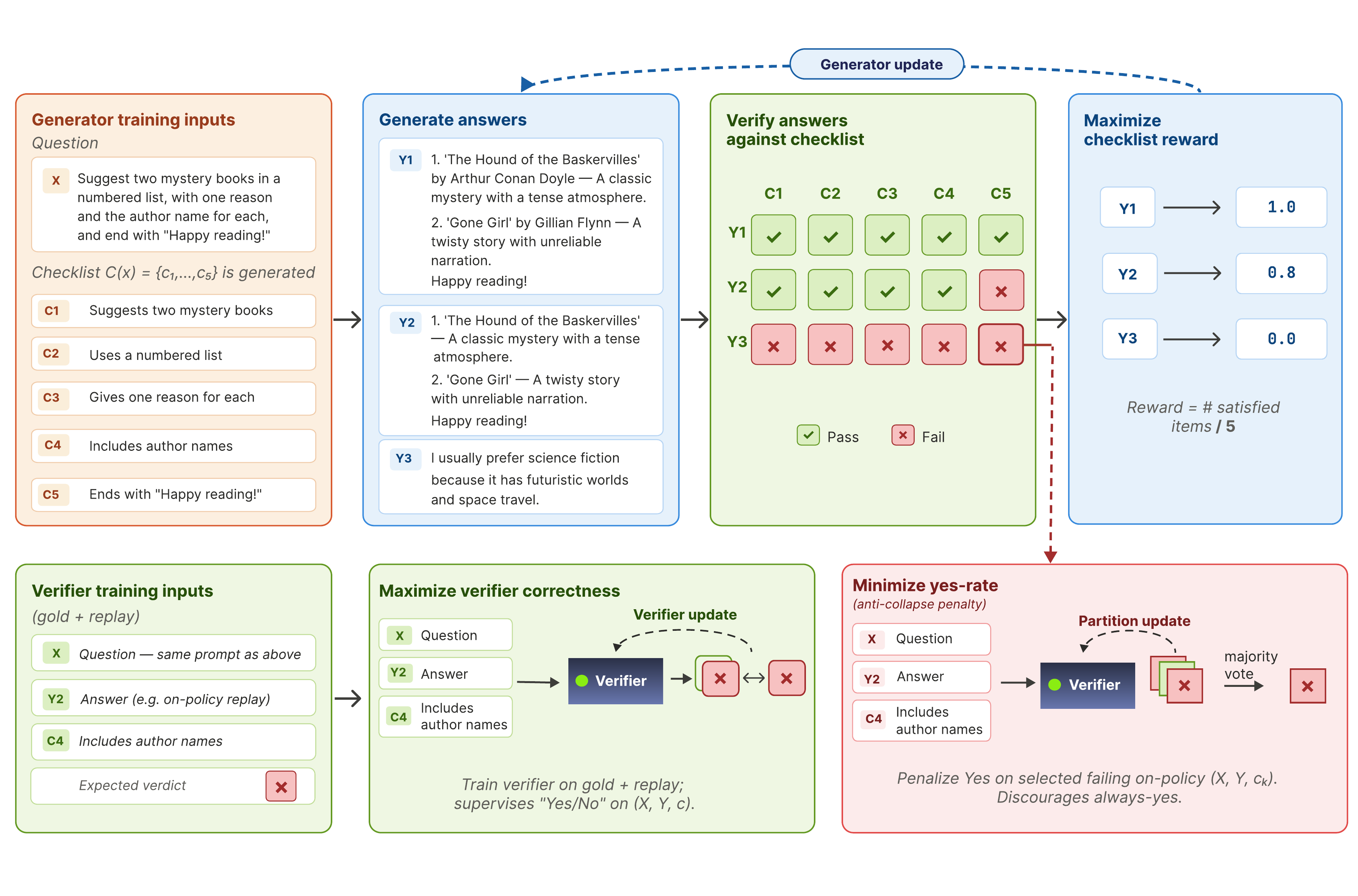}
    \caption{\textbf{Overview of \softsverl{}.}
    A prompt is decomposed into checklist criteria, candidate answers are scored item by item, and the resulting soft rewards update the generator. Because the generator also serves as the verifier, \softsverl{} adds verifier-side stabilization: labeled examples improve item-level judgments, while failing checklist contexts discourage inflated Yes predictions.
    }
    \label{fig:soft-sverl-overview}
\end{figure}

\section{Preliminaries}
\label{sec:background}

Let $\pi_\theta(y \mid x)$ denote a language model policy parameterized by $\theta$, where $x$ is a prompt and $y$ is a sampled completion. Given a reward function $R$ that scores completions, the goal of policy optimization is to maximize expected reward under the policy. In reference-based settings, this reward may depend on a reference answer $y^*$, giving the objective
\begin{equation}
    \label{eq:rl-objective}
    \argmax_{\theta}\; \E_{(x,y^*)\sim\mathcal{D},\; y\sim\pi_\theta(\cdot\mid x)}\bigl[R(y, y^*)\bigr],
\end{equation}
where $\mathcal{D} = \{(x_i, y_i^*)\}$ is a dataset of prompt-reference pairs.
In the settings we study, however, exact reference answers may be unavailable or insufficient, so the central question is how to construct a useful reward directly from the prompt and candidate response.

\subsection{Reinforcement learning with verifiable rewards (RLVR)}
When outputs admit exact verification, as in mathematics or code, a natural reward is binary correctness:
\begin{equation}
    \label{eq:binary-reward}
    R_{\mathrm{correct}}(y, y^*) = \mathbf{1}[y \equiv y^*],
\end{equation}
where equivalence may allow for formatting or other task-specific normalizations. More generally, the reward can be computed by a deterministic checker, such as a symbolic verifier or a unit-test suite. Because these rewards are reliable and not directly manipulable by the policy, they have enabled strong RLVR results in reasoning and code-generation settings~\citep{deepseekr1,grpo,kimik15,bercovich2025llamanemotron}.

Many instruction-following tasks, however, do not admit a single exact equivalence check. A response may satisfy some requirements but not others, making binary correctness too coarse. This motivates replacing exact verification with learned verification.

\subsection{Holistic learned verifiers}
For tasks without exact checkers, a learned verifier can provide a reward. Given a prompt $x$ and completion $y$, a verifier $\rho_\phi(x,y) \in \{0,1\}$ returns a binary judgment of whether the response satisfies the instruction:
\begin{equation}
    \label{eq:llm-verifier-reward}
    R_{\mathrm{ver}}(x,y;\phi) = \rho_\phi(x, y).
\end{equation}
In practice, $\rho_\phi$ is often implemented by prompting an LLM to produce a Yes/No decision.

A holistic learned verifier extends RL beyond exactly verifiable tasks, but it compresses evaluation into a single decision. For complex prompts, this decision can be noisy or underspecified: a response may satisfy some requirements and fail others, yet still receive only one pass/fail label. We therefore consider a more structured verifier that evaluates individual requirements separately.

\subsection{Checklist-based verification with learned verifiers}
Hard-to-verify instructions are often better described by a set of atomic criteria
$C(x)=\{c_1,\ldots,c_K\}$ derived from the prompt $x$. For each response $y$ and criterion $c_k$, a learned verifier $\rho_\phi$ judges whether $y$ satisfies $c_k$.

In our experiments, \(C(x)\) is generated offline by an LLM checklist generator conditioned on the instruction \(x\). The generator outputs an ordered numbered list of essential yes/no criteria, which we parse into checklist items and keep fixed throughout RL training. The exact prompt template is given in Appendix~\ref{app:checklist-prompt}; representative examples and failure modes are provided in Appendix~\ref{app:checklist-examples}.

In the simplest case, the verifier returns a binary item-level judgment
\begin{equation}
\rho_\phi(x,y,c_k) \in \{0,1\},
\end{equation}
and the checklist reward averages these judgments:
\begin{equation}
R_{\mathrm{soft}}(x,y;\phi)
=
\frac{1}{K}\sum_{k=1}^K \rho_\phi(x,y,c_k).
\end{equation}
This partial-credit formulation preserves learning signal for responses that satisfy some but not all criteria. 
For example, if a prompt asks for exactly three bullet points, a mention of Paris, and no use of the word \emph{travel}, then a response satisfying two of the three requirements receives reward $2/3$ rather than zero. At the same time, partial credit relaxes the strict-pass condition: incomplete responses can receive nonzero reward. Section~\ref{sec:checklist-mse} analyzes this tradeoff formally.

In practice, $\rho_\phi$ is implemented by a generative LLM. For each item $c_k$, we sample $J$ verifier outputs
\begin{equation}
z_{k1},\ldots,z_{kJ} \sim \pi_\phi(\cdot \mid x,y,c_k),
\end{equation}
and parse each output into a binary decision $d_{kj} = \mathrm{parse}(z_{kj}) \in \{0,1\}$.

We define the empirical pass rate
\begin{equation}
\hat{p}_k(x,y) = \frac{1}{J}\sum_{j=1}^J d_{kj},
\end{equation}
and obtain the aggregated verifier judgment
\begin{equation}
\rho_\phi(x,y,c_k) = \mathbf{1}[\hat{p}_k(x,y)\geq \tau].
\end{equation}
Other aggregators, such as majority vote, shaped-averaging, or median aggregation, are possible; throughout the paper, $\rho_\phi(x,y,c_k)$ denotes the parsed and aggregated checklist-item judgment.

\section{Reinforcement Learning with Checklist-Based Soft Rewards}
\label{sec:soft-rl}

Given a prompt $x$, a checklist $C(x)=\{c_1,\ldots,c_K\}$, and a learned verifier $\rho_\phi$, \softrlvr{} trains the policy $\pi_\theta(y\mid x)$ to maximize the expected checklist reward:
\begin{equation}
\arg\max_\theta
\mathbb{E}_{x\sim D_X,\, y\sim \pi_\theta(\cdot\mid x)}
\left[
r(x,y,C(x),\rho_\phi)
\right].
\end{equation}
where $r$ is computed from item-level verifier judgments. In the simplest case, $r=R_{\mathrm{soft}}$, the fraction of checklist items judged as satisfied. This gives the policy partial credit for satisfying some requirements even when the full response is not a strict success. Responses that would both receive zero reward under a strict pass/fail verifier can therefore receive different rewards depending on how much of the instruction they satisfy.

This formulation is analogous to RLVR for code, where a generated program is evaluated against multiple unit tests. The key difference is that checklist items are evaluated by a learned LLM verifier rather than executable tests. This makes the reward applicable beyond exactly verifiable domains, but it also introduces two sources of error. First, each item-level verifier judgment may be noisy. Second, partial credit can reward incomplete responses. The rest of this section analyzes when the benefit of decomposition outweighs this additional bias.

\subsection{Bias--variance tradeoff of checklist verification}
\label{sec:checklist-mse}

Checklist verification changes the error profile of the reward. A holistic verifier produces one noisy judgment for the whole response, while a checklist verifier produces $K$ item-level judgments and averages them. Averaging can reduce reward variance, especially when item-level errors are weakly correlated. However, the soft reward also relaxes the strict success target: a response that fails one requirement may still receive positive reward. Thus, checklist verification can be guaranteed to improve the training signal only when the variance reduction from decomposition is large enough to offset the bias introduced by partial credit. 
We formalize this tradeoff below.

\subsubsection{Setup}

Let $\mathbf{Y}^*(x,y) = [Y_1^*,\dots,Y_K^*]^\top \in \{0,1\}^K$ denote the latent checklist vector, where $Y_k^*=1$ means that response $y$ satisfies checklist item $c_k$.
We define the strict success target as
\begin{equation}
S^*(x,y) := \prod_{k=1}^K Y_k^* ,
\label{eq:strict-success-target}
\end{equation}
and the partial-credit target as
\begin{equation}
\bar S^*(x,y) := \frac{1}{K}\sum_{k=1}^K Y_k^* .
\label{eq:partial-credit-target}
\end{equation}
The relaxation gap
\begin{equation}
\Delta^*(x,y) := \bar S^*(x,y)-S^*(x,y) \ge 0
\label{eq:relaxation-gap}
\end{equation}
measures how much partial credit rewards incomplete responses beyond the strict success target.
We take strict success as the ideal target because our controlled evaluation uses rule-based pass/fail instruction satisfaction; partial credit is treated as a training relaxation that may improve optimization but can introduce bias relative to this target.

Let $J$ denote the output of a single holistic verifier and $J_k$ the output of the checklist verifier for item $k$. We define the averaged checklist judgment as
\begin{equation}
\bar J := \frac{1}{K}\sum_{k=1}^K J_k .
\label{eq:avg-checklist-judgment}
\end{equation}
We also define the score function as
\begin{equation}
s_\theta(x,y) := \nabla_\theta \log \pi_\theta(y \mid x),
\label{eq:score-function}
\end{equation}
and let $b(x)$ be a baseline.

We compare the update induced by the learned verifier to the update that would be obtained from the true strict-success signal. The ideal, single-verifier, and checklist estimators are
\begin{equation}
\begin{aligned}
g^*(x,y) &:= \bigl(S^*(x,y)-b(x)\bigr)s_\theta(x,y), \\
g_{\mathrm{single}}(x,y) &:= (J-b(x))\,s_\theta(x,y), \\
g_{\mathrm{chk}}(x,y) &:= (\bar J-b(x))\,s_\theta(x,y).
\end{aligned}
\label{eq:gradient-estimators}
\end{equation}

Finally, let $p$ and $q$ denote the true positive and true negative rates of the holistic verifier with respect to the strict target $S^*$.
Let $p'$ and $q'$ denote the corresponding item-level true positive and true negative rates of the checklist verifier with respect to each latent item label $Y_k^*$.
We write
\begin{equation}
    \alpha := p+q-1,
    \qquad
    \alpha' := p'+q'-1 .
    \label{eq:alpha-definitions}
\end{equation}
Equivalently, the holistic verifier satisfies
\begin{equation}
    \E[J \mid S^*] = (1-q)+\alpha S^*
    \label{eq:holistic-verifier-expectation}
\end{equation}
and each checklist verifier satisfies
\begin{equation}
    \E[J_k \mid Y_k^*] = (1-q')+\alpha'Y_k^* .
    \label{eq:checklist-verifier-expectation}
\end{equation}
These quantities let us compare two sources of error: variance from noisy verifier judgments and bias from replacing strict success $S^*$ with partial credit $\bar S^*$.

\begin{proposition}[Checklist variance under conditional independence]
\label{prop:checklist-variance}
Under the setup above, if $J_1,\dots,J_K$ are conditionally independent given $(x,y)$, then
\begin{equation}
\mathrm{Var}(\bar J \mid x,y)
=
\frac{1}{K^2}\sum_{k=1}^K \mu_k(x,y)\bigl(1-\mu_k(x,y)\bigr)
\le \frac{1}{4K},
\end{equation}
where $\mu_k(x,y) := (1-q')+\alpha' Y_k^*(x,y)$, and therefore
\begin{equation}
\mathbb{E}\!\left[
\|g_{\mathrm{chk}}-\mathbb{E}[g_{\mathrm{chk}}\mid x,y]\|_2^2
\,\middle|\, x,y
\right]
\le
\frac{1}{4K}\|s_\theta(x,y)\|_2^2.
\end{equation}
\end{proposition}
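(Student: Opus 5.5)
The plan is a direct computation conditional on $(x,y)$. Once $(x,y)$ is fixed, the latent labels $Y_k^*(x,y)$, the score $s_\theta(x,y)$ and the baseline $b(x)$ are all deterministic. The only randomness left is in the verifier outputs $J_1,\dots,J_K$.

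First I will identify the conditional law of each $J_k$. I take $J_k\in\{0,1\}$, consistent with the binary aggregated judgment $\rho_\phi(x,y,c_k)$ of the preliminaries. By \eqref{eq:checklist-verifier-expectation}, and because $Y_k^*$ is a function of $(x,y)$, we get $\E[J_k\mid x,y]=(1-q')+\alpha' Y_k^*=\mu_k(x,y)$. Here I am reading the verifier rates as holding conditionally on $(x,y)$ through $Y_k^*$ only, which is how the setup is intended. A Bernoulli variable with mean $\mu_k$ has variance $\mu_k(1-\mu_k)$.

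Next I will use conditional independence. This makes all conditional covariances $\mathrm{Cov}(J_k,J_l\mid x,y)$ with $k\neq l$ vanish. Hence
$\mathrm{Var}(\bar J\mid x,y)=K^{-2}\sum_k \mathrm{Var}(J_k\mid x,y)=K^{-2}\sum_k\mu_k(1-\mu_k)$.
Since $\mu(1-\mu)\le 1/4$ for $\mu\in[0,1]$, the sum is at most $K/4$. This gives the bound $1/(4K)$.

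For the gradient statement, write $g_{\mathrm{chk}}=(\bar J-b(x))s_\theta(x,y)$. Since $b(x)$ and $s_\theta(x,y)$ are constant given $(x,y)$, we have
$g_{\mathrm{chk}}-\E[g_{\mathrm{chk}}\mid x,y]=(\bar J-\E[\bar J\mid x,y])\,s_\theta(x,y)$.
Taking the squared norm, the scalar factors out. The conditional expectation is then exactly $\mathrm{Var}(\bar J\mid x,y)\,\|s_\theta(x,y)\|_2^2$, and the first part gives the claimed bound.

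There is no real obstacle. The only point needing care is the modeling convention that each $J_k$ is binary and that its conditional mean given $(x,y)$ equals $\mu_k$. If $J_k$ were instead allowed to take values in $[0,1]$, I would replace the Bernoulli variance identity with the standard bound $\mathrm{Var}(J_k\mid x,y)\le \mu_k(1-\mu_k)$. That bound follows from $J_k^2\le J_k$, and it gives the same final inequality, though the displayed equality would then hold only as an inequality.
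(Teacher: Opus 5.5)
Your proposal is correct and matches the paper's proof essentially step for step. Both condition on $(x,y)$ so that $b(x)$ and $s_\theta(x,y)$ are fixed, treat each $J_k$ as Bernoulli with mean $\mu_k(x,y)$, drop the cross-covariances by conditional independence (the paper does this by specializing its general variance formula in Proposition~\ref{prop:variance}), bound $\mu_k(1-\mu_k)\le 1/4$, and factor out $\|s_\theta(x,y)\|_2^2$; your closing remark that the bound survives for $[0,1]$-valued $J_k$ via $J_k^2\le J_k$ is a correct minor generalization the paper does not state.
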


The proof is in Appendix~\ref{app:proof-prop-checklist-variance}.
The single-verifier estimator has conditional variance
$\mu_{\mathrm{single}}(1-\mu_{\mathrm{single}})\|s_\theta\|_2^2 = O(1)$,
where $\mu_{\mathrm{single}}(x,y):=(1-q)+\alpha S^*(x,y)$.
Thus, under conditional independence, averaging $K$ item-level verifier judgments reduces the variance term by a factor of $1/K$.

The conditional independence assumption is idealized: verifier errors across checklist items may be correlated in practice. The proposition should therefore be read as identifying the variance-reduction mechanism of decomposition; positive error correlation reduces the gain, while weakly correlated item errors make the gain closer to the $1/K$ rate.

Variance reduction alone is not sufficient. Checklist rewards can also be biased because partial credit assigns nonzero reward to incomplete responses. 
The bias is harmless on correct outputs if item-level sensitivity is at least as high as holistic sensitivity. On incorrect outputs, however, partial credit creates an additional error proportional to the relaxation gap $\Delta^*$. This error must be small relative to any specificity gain from item-level verification. Appendix~\ref{app:proof-prop-checklist-bias} formalizes this bias comparison.

Combining the variance and bias terms yields the following sufficient conditions for checklist verification to improve the mean-squared error (MSE) of the policy-gradient estimator. Here MSE measures how close the verifier-induced update is to the ideal update $g^*$: lower MSE means that the learned reward produces an update that is better aligned with the update from the true strict-success signal.

\begin{theorem}[Sufficient conditions for gradient MSE improvement]
\label{thm:checklist-mse}
Suppose the single verifier satisfies $\mathbb{E}[J \mid S^*] = (1-q)+\alpha S^*$ with
$\alpha := p+q-1$, each checklist verifier satisfies
$\mathbb{E}[J_k \mid Y_k^*] = (1-q')+\alpha'Y_k^*$ with $\alpha' := p'+q'-1$, and
$J_1,\dots,J_K$ are conditionally independent given $(x,y)$.
Then
\begin{equation}
\mathbb{E}\!\left[\|g_{\mathrm{chk}}-g^*\|_2^2 \,\middle|\, x,y\right]
\le
\mathbb{E}\!\left[\|g_{\mathrm{single}}-g^*\|_2^2 \,\middle|\, x,y\right]
\end{equation}
whenever either
\begin{equation}
S^*(x,y)=1 \qquad\text{and}\qquad p' \ge p,
\end{equation}
or
\begin{equation}
S^*(x,y)=0 \qquad\text{and}\qquad
\Bigl(1-q'+\alpha'\Delta^*(x,y)\Bigr)^2 + \frac{1}{4K} \le 1-q.
\end{equation}
\end{theorem}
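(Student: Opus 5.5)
Conditional on $(x,y)$, everything in both estimators except the scalar coefficient is fixed: $s_\theta$, $b$, $S^*$, and $\Delta^*$ are deterministic, and only the verifier outputs are random. The differences factor as
\[
g_{\mathrm{chk}}-g^*=(\bar J-S^*)\,s_\theta,
\qquad
g_{\mathrm{single}}-g^*=(J-S^*)\,s_\theta .
\]
Both conditional MSEs therefore equal $\|s_\theta\|_2^2$ times a scalar mean-squared error, and the baseline drops out. It suffices to compare $\E[(\bar J-S^*)^2\mid x,y]$ with $\E[(J-S^*)^2\mid x,y]$. For each I use the bias--variance split $\E[(Z-S^*)^2]=(\E Z-S^*)^2+\mathrm{Var}(Z)$.

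\emph{Single verifier.} $J\in\{0,1\}$ with mean $\mu_{\mathrm{single}}=(1-q)+\alpha S^*$, so $\E[(J-S^*)^2]=\Pr(J\neq S^*)$:
\begin{itemize}
\item if $S^*=1$, it equals $1-p$;
\item if $S^*=0$, it equals $1-q$.
\end{itemize}

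\emph{Checklist verifier.} By linearity and the item-level model, $\E[\bar J\mid x,y]=(1-q')+\alpha'\bar S^*$. Proposition~\ref{prop:checklist-variance} gives the variance bound $\mathrm{Var}(\bar J\mid x,y)\le 1/(4K)$; this is the only use of conditional independence.

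\emph{Case $S^*=0$.} Here $\bar S^*=\Delta^*$, so the checklist bias is $1-q'+\alpha'\Delta^*$. The checklist MSE is therefore at most $(1-q'+\alpha'\Delta^*)^2+1/(4K)$. The hypothesis makes this at most $1-q$, which is the single-verifier MSE, and this case is done.

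\emph{Case $S^*=1$.} Every $Y_k^*=1$, so each $J_k$ is Bernoulli$(p')$ and independent. Then $\E[\bar J]=p'$ and the bias is $1-p'$. The variance is exactly $p'(1-p')/K$ rather than only bounded by $1/(4K)$. So
\[
\E[(\bar J-1)^2]=(1-p')^2+\frac{p'(1-p')}{K}=(1-p')\Bigl(1-p'+\frac{p'}{K}\Bigr)\le 1-p'.
\]
The last step holds because $1-p'+p'/K\le 1$ for $K\ge 1$. Since $p'\ge p$, we get $1-p'\le 1-p$, which is the single-verifier MSE.

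\emph{Main obstacle.} The obstacle is the $S^*=1$ case. The crude bound $1/(4K)$ from the proposition is not enough there: it would require $(1-p')^2+1/(4K)\le 1-p$, which can fail when $p'$ is close to $1$. One must instead use the exact Bernoulli variance and note that a $\{0,1\}$-valued average has MSE at most the single-item error rate. Equivalently, for $Z\in[0,1]$ with target $1$, $(1-Z)^2\le 1-Z$, so $\E[(\bar J-1)^2]\le 1-\E\bar J=1-p'$. This argument does not even need independence.

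Multiplying back by $\|s_\theta\|_2^2$ gives the stated inequality in both cases.
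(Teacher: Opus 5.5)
Your proof is correct and follows the paper's argument essentially step for step: the bias--variance split, the $1/(4K)$ variance bound when $S^*=0$, and the exact variance $p'(1-p')/K$ together with $(1-p')^2+p'(1-p')/K\le 1-p'\le 1-p$ when $S^*=1$. Your closing remark is a valid small strengthening the paper does not make: since $(1-\bar J)^2\le 1-\bar J$, the $S^*=1$ case does not actually need conditional independence.
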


The proof is in Appendix~\ref{app:proof-thm-checklist-mse}.
On correct outputs ($S^*=1$), checklist verification gives an update closer to the ideal update whenever item-level sensitivity is at least as high as holistic sensitivity ($p'\ge p$), with an additional benefit from the $1/K$ variance reduction that comes from averaging $K$ conditionally independent verifier calls.
On incorrect outputs ($S^*=0$), partial-credit relaxation introduces extra bias proportional to $\alpha'\Delta^*$.
The condition in the theorem requires this bias to be small enough for variance reduction to offset it.

\textbf{Takeaway.}
Checklist verification is most useful when decomposition makes the reward less noisy without substantially changing what the reward is trying to optimize. Larger checklists can reduce variance, but they cannot by themselves fix a poor decomposition or a verifier that systematically approves incomplete responses. Thus, checklist quality and verifier specificity are central: the checklist items must capture meaningful requirements, and the verifier must reject failures often enough that partial credit does not become reward inflation.

This analysis leads to two empirical predictions that we test in Section~\ref{sec:checklist-vs-holistic}. First, checklist verification should help most when holistic verifier judgments are noisy. Second, checklist quality should matter: poorly decomposed criteria can increase the relaxation gap or make item-level judgments less reliable, reducing the benefit of decomposition. Appendix~\ref{app:additional-checklist-mse} gives additional corollaries that make the checklist-size thresholds explicit and show that larger $K$ alone cannot guarantee improvement when partial-credit bias is too large.

\section{\softsverl: Self-Verified Soft RLVR}
\label{sec:self-verification}

The checklist-based formulation in Section~\ref{sec:soft-rl} assumes a learned verifier $\rho_\phi$ that is fixed or trained separately from the policy. We now consider the shared-parameter case. 
In the self-verification setting, we set $\phi=\theta$, so the same model serves as both generator and verifier. This setting has two practical advantages: it removes the need for a separate online reward model, and it can be applied when no stronger external judge is available. It can also reduce training cost by avoiding calls to a separate verifier model.

\begin{wrapfigure}{r}{0.48\textwidth}
    \centering
    \vspace{-3mm}
    \includegraphics[width=\linewidth]{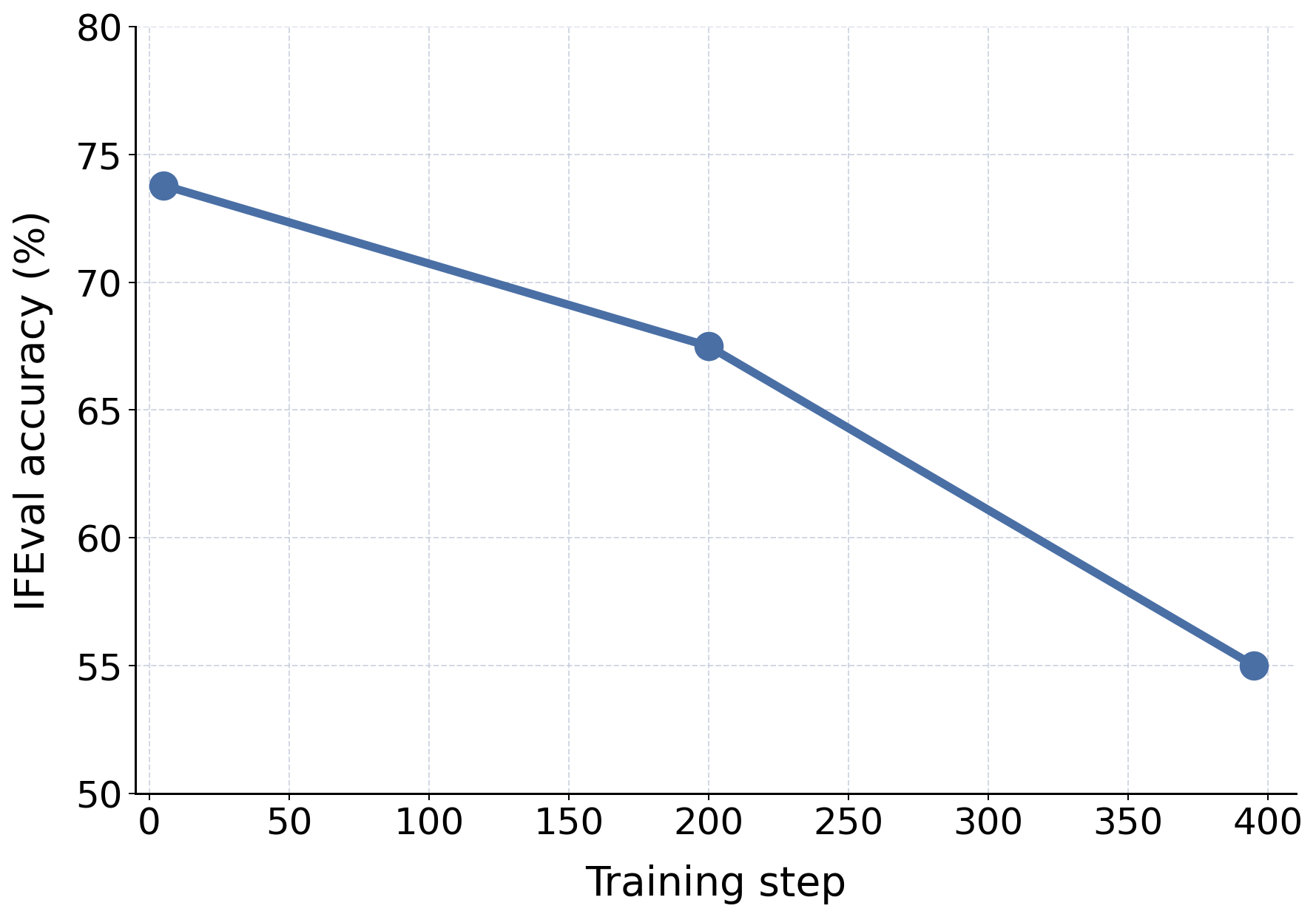}
    \caption{\textbf{Naive self-verification training.} The generator and verifier share parameters and are optimized jointly using the verifier's own reward signal. \ifeval{} drops from 73.9\% at initialization to 55.1\% by step~400, consistent with always-yes collapse: measured reward improves while task accuracy degrades.}
    \label{fig:reward-collapse}
    \vspace{-3mm}
\end{wrapfigure}

However, sharing parameters also changes the optimization problem. With a fixed external verifier, the reward function is outside the policy update: the policy can increase reward only by producing outputs that the verifier already scores highly. With self-verification, the reward function is implemented by the same parameters being updated. The model can therefore increase measured reward either by generating better responses or by making its own verifier more permissive.

This creates the central challenge for self-verification: the measured reward can improve even when task performance degrades. Section~\ref{sec:why-nontrivial} describes this failure mode, which we call \emph{always-yes collapse}, and Section~\ref{sec:stabilize} introduces the stabilizers used in \softsverl{}.

\subsection{Why self-verification is non-trivial}
\label{sec:why-nontrivial}

The resulting failure mode is \emph{always-yes collapse}, in which the verifier says Yes for almost every response and checklist item, whether or not the response satisfies the requirement. In terms of the empirical vote fraction, this corresponds to $\hat p_k(x,y)\approx 1$ for nearly every $(x,y,c_k)$. 
Once this happens, low-quality generations receive inflated rewards and the generator loses the gradient signal needed to improve. Two mechanisms make this collapse especially likely under self-verification.

\textbf{Correlated priors.} First, the verifier starts from the same inductive biases as the generator. The responses that are easiest for the generator to produce are also the responses on which the verifier may be most overconfident. Thus, the relevant question is not only whether the verifier is calibrated on a held-out set, but whether it is calibrated on the distribution of outputs induced by the current policy. This distinction matters because self-verification repeatedly scores model-generated responses, precisely where shared biases between generation and verification are most likely to appear.

\textbf{Gradient coupling.} Second, the shared objective no longer distinguishes between improving the generator and making the verifier more permissive. With a separate verifier, increasing expected reward requires the policy to put more probability on responses that the verifier already scores highly. When $\phi=\theta$, the same measured reward can also increase because the verifier assigns higher scores to responses the policy already produces. Both effects improve the training objective, but only the first corresponds to better task performance. Since both effects arise from updates to the same parameters, ordinary policy optimization has no reason to prefer task improvement over reward inflation. The shared model can therefore drift toward over-approval unless the objective explicitly counteracts this tendency.

Figure~\ref{fig:reward-collapse} shows the empirical signature of this pathology: naive shared-parameter training drops \ifeval{} by roughly 19 points over 400 training steps, well below the initial policy.
This confirms that self-verification without stabilization can degrade task performance rather than improve it.

\subsection{Stabilizing self-verification}
\label{sec:stabilize}

Stable self-verification requires addressing both failure modes above. Correlated priors make the verifier unreliable on the model-generated responses it must score, while gradient coupling allows the shared update to increase measured reward by making the verifier more permissive. \softsverl{} first reduces reward noise by aggregating multiple verifier samples for each response--checklist pair, then adds two stabilizing signals to the shared objective: verifier co-training, which anchors the verifier on labeled examples, and a partition-style Yes-rate penalty, which directly discourages reward inflation on failing items.

Figure~\ref{fig:soft-sverl-overview} gives a high-level overview of how generation, checklist verification, and verifier-side stabilization interact in \softsverl{}.

\textbf{Repeated verification for generator rewards.}
Because the generator reward is computed from sampled verifier judgments, a single verifier rollout can inject substantial reward noise. For each response--checklist pair, we sample $J$ verifier traces and compute the empirical Yes-rate $\hat p_k(x,y)$. This aggregation gives the generator a more stable item-level reward signal and also provides a confidence statistic for deciding which on-policy examples are reliable enough to reuse for verifier training.

\textbf{Verifier co-training with gold and replay data.}
The shared model is additionally trained as a verifier on a mixture of (i) \emph{gold} examples from a fixed verifier-improvement buffer, consisting of prompt--response--criterion triples $(x,y,c)$ with known item-level labels $\ell\in\{0,1\}$, and (ii) \emph{replay} examples sampled from the current generator policy $\pi_\theta$ and pseudo-labeled at the checklist-item level. 
This term primarily addresses correlated priors. Gold examples provide an independent anchor for what checklist satisfaction means, so the verifier does not define correctness only through what the current policy tends to generate. Replay examples move this supervision onto the distribution used for reward computation, including current-policy failures that a shared verifier may otherwise approve. 
For replay tagging, we use separate thresholds $\tau_+$ and $\tau_-$. Because self-verification is especially vulnerable to false positives, replay labels are added conservatively: items with $\hat p_k(x,y)\ge\tau_+$ are added as positives, items with $\hat p_k(x,y)\le\tau_-$ are added as negatives, and intermediate items are not used as hard verifier labels. The coefficient $\lambda_v$ controls how strongly these supervised verifier gradients constrain the shared update.

\textbf{Partition-style Yes-rate penalty.}
Verifier co-training alone does not fully close the reward-inflation channel, because the generator objective can still benefit from shared-parameter updates that raise verifier Yes-rates on the policy's own samples. \softsverl{} therefore adds a penalty on a selected subset of failed checklist contexts. We call this a partition-style penalty because its gradient is a selected, biased finite-sample approximation to the negative policy-sample term obtained by differentiating the log partition function of the KL-regularized optimal policy; Appendix~\ref{app:partition-loss-motivation} shows the derivation. The bias comes from replacing the full expectation over reward-induced policy samples and checklist items with current-policy samples from a selected failure subset; this selection is intentional, since applying the correction to all on-policy items could also suppress genuine successes. Unlike replay labels, this penalty does not require a confident negative threshold because it is not used as hard supervision. 
We instead select mixed-vote items from responses that are not strict successes: $\hat p_k(x,y)>0$ and $s(x,y)<1$. This excludes strict successes and unanimous-No items, while retaining cases where the verifier produces one or more Yes votes on a response that failed consensus. These Yes traces reveal permissive verifier decisions that self-reward can amplify. The partition penalty lowers the Yes-rate on these selected failures, directly opposing updates that would make the verifier more permissive. The coefficient $\lambda_p$ controls the strength of this anti-collapse pressure. Because the penalty reuses the verifier traces collected during reward scoring, it does not require additional verifier forward passes.

Combining the generator reward with these two stabilizing signals gives the shared objective
\begin{equation}
\mathcal{J}(\theta)
=
\mathcal{J}_{\mathrm{gen}}(\theta)
+ \lambda_v \mathcal{J}_{\mathrm{ver}}(\theta)
- \lambda_p \mathcal{J}_{\mathrm{part}}(\theta).
\label{eq:shared_objective}
\end{equation}
Here $\mathcal{J}_{\mathrm{gen}}$ trains the generator under the soft checklist reward,
$\mathcal{J}_{\mathrm{ver}}$ trains the verifier on gold and replay labels, and
$\mathcal{J}_{\mathrm{part}}$ penalizes the Yes-rate on the failing partition and is subtracted as an anti-collapse penalty.
Algorithm~\ref{alg:soft-sverl} summarizes the resulting shared-parameter training loop.
\begin{algorithm}[!t]
\small
\caption{Soft-SVeRL}
\label{alg:soft-sverl}
\begin{algorithmic}[1]
\setlength{\itemsep}{0pt}
\setlength{\parsep}{0pt}
\Statex \textbf{Inputs:} Input distribution $\mathcal{D}_X$; gold verifier buffer
$\mathcal{D}_{\mathrm{Gold}}$; replay verifier buffer
$\mathcal{D}_{\mathrm{Replay}}$; checklist generator $\mathcal{C}(\cdot)$;
candidate count $G$; verifier vote count $J$; thresholds $\tau_+,\tau_-$;
partial-credit scale $\beta$; weights
$\lambda_v,\lambda_p$
\Statex \textbf{Model:} Shared model $\theta$ used for generation $\pi_\theta$ and verification $\rho_\theta$
\For{$t = 1,\dots,T$}
    \State \genhl{Sample $x \sim \mathcal{D}_X$ and checklist
    $\mathcal{C}(x)=\{c_k\}_{k=1}^K$}
    \State \genhl{Sample candidates $y_1,\dots,y_G \sim \pi_\theta(\cdot \mid x)$}
    \ForAll{\genhl{$(i,k,j)\in [G]\times[K]\times[J]$}}
        \State \genhl{Sample $z_{ikj} \sim \rho_\theta(\cdot \mid x,y_i,c_k)$ and set
        $d_{ikj} \gets \operatorname{parse}(z_{ikj})$}
    \EndFor
    \State \genhl{Compute $\hat p_{ik} \gets \frac{1}{J}\sum_{j=1}^J d_{ikj}$ for all $(i,k)$}
    \State \genhl{Set $\hat \ell_{ik}\gets \mathbf{1}[\hat p_{ik}\ge\tau_+]$ for all $(i,k)$}
    \Comment{\genhl{pass label}}
    \For{\genhl{$i=1,\dots,G$}}
        \State \genhl{Set $s_i \gets \frac{1}{K}\sum_{k=1}^K \hat \ell_{ik}$}
        \Comment{\genhl{soft checklist score}}
        \State \genhl{Set $R_i \gets \mathbf{1}[s_i=1] + \beta s_i\,\mathbf{1}[s_i<1]$}
        \Comment{\genhl{generator reward}}
    \EndFor
    \ForAll{\verhl{$(i,k)\in [G]\times[K]$}}
        \If{\verhl{$\hat p_{ik}\ge\tau_+$}}
            \State \verhl{Insert $(x,y_i,c_k,\ell=1)$ into
            $\mathcal{D}_{\mathrm{Replay}}$}
        \ElsIf{\verhl{$\hat p_{ik}\le\tau_-$}}
            \State \verhl{Insert $(x,y_i,c_k,\ell=0)$ into
            $\mathcal{D}_{\mathrm{Replay}}$}
        \EndIf
    \EndFor
    \State \parthl{Select subset $\mathcal{S}(x)\subseteq [G]\times[K]$ where
    $\mathcal{S}(x)=\{(i,k): \hat p_{ik}>0 \text{ and } s_i<1\}$}
    \State \verhl{Sample verifier tuples $(x,y,c,\ell)
    \sim \mathcal{D}_{\mathrm{Gold}} \cup \mathcal{D}_{\mathrm{Replay}}$}
    \For{\verhl{$g=1,\dots,G$}}
        \State \verhl{Sample $z_g \sim \rho_\theta(\cdot \mid x,y,c)$}
        \State \verhl{Set $R_g^{\mathrm{ver}} \gets \mathbf{1}[\operatorname{parse}(z_g)=\ell]$}
        \Comment{\verhl{checklist verifier reward}}
    \EndFor
    \ForAll{\parthl{$(i,k)\in\mathcal{S}(x)$}}
        \For{\parthl{$j=1,\dots,J$}}
            \State \parthl{Set $R_{ikj}^{\mathrm{part}}\gets d_{ikj}$}
            \Comment{\parthl{Yes-rate penalty}}
        \EndFor
    \EndFor
    \State Update $\theta$ to maximize
    $\mathcal{J}(\theta)=\mathcal{J}_{\mathrm{gen}}(\theta)
    + \lambda_v \mathcal{J}_{\mathrm{ver}}(\theta)
    - \lambda_p \mathcal{J}_{\mathrm{part}}(\theta)$
    \Comment{shared update}
\EndFor
\end{algorithmic}
\end{algorithm}

\section{Experiment Setup}
\label{sec:experiments}

\textbf{Training data.}
We train on prompts from the RL split of the Llama-Nemotron-Post-Training Dataset~\citep{bercovich2025llamanemotron}, which contains 56{,}339 instruction-following prompts. Each prompt includes one or more explicit constraints drawn from the 25 verifiable instruction types defined by \ifeval{}~\citep{zhou2023instructionfollowingevaluationlargelanguage}, such as formatting requirements, keyword inclusion or exclusion, length limits, and structural specifications. Each constraint in this dataset can be checked by a deterministic rule-based verifier, which we treat as ground truth for evaluation. Because the constraint universe matches \ifeval{}, training with this oracle verifier uses the same family of rule-based checks as evaluation, eliminating verifier train--test mismatch.

This controlled training and evaluation setup is useful for our analysis because it separates reward-learning effects from evaluation ambiguity. The deterministic verifier lets us evaluate learned checklist rewards against exact rule-based labels, measure verifier agreement with ground-truth constraints, and test the empirical predictions of the MSE analysis in Section~\ref{sec:checklist-mse}.

\textbf{Training details.}
All experiments start from Command R7B%
\footnote{\href{https://huggingface.co/CohereLabs/c4ai-command-r7b-12-2024}{https://huggingface.co/CohereLabs/c4ai-command-r7b-12-2024}},
further adapted with the math-and-code SFT data used in the Command A Reasoning recipe~\citep{cohere2025commanda}%
\footnote{\href{https://cohere.com/blog/command-a-reasoning}{https://cohere.com/blog/command-a-reasoning}}.
We generate prompt checklists offline using DeepSeek-R1-05-28%
\footnote{\href{https://api-docs.deepseek.com/news/news250528}{https://api-docs.deepseek.com/news/news250528}};
the checklist-generation prompt is given in Appendix~\ref{app:checklist-prompt}.
For external-verifier experiments, we use GPT-OSS-20B and GPT-OSS-120B as the main LLM verifiers. These models provide a practical tradeoff between instruction-following quality and verification throughput, which is important because reward computation requires scoring multiple checklist items per sampled response. We also include additional verifier choices, such as the initial policy, in the verifier-quality ablations. The verifier prompt is shown in Appendix~\ref{app:verification-prompt}. For verifier models that support adaptive thinking, we use the maximum thinking-budget setting.

We optimize the policy with Group Relative Policy Optimization (GRPO)~\citep{grpo} using a FAX-based asynchronous RL training framework~\citep{yoo2022fax}. Following DAPO~\citep{dapo}, we use asymmetric clipping bounds. For external verifiers, we only use a single pass ($J=1$) to generate the decision. We sweep learning rates and report the best-performing run for each setting, with all other hyperparameters listed in Appendix~\ref{app:training-hparams}.

\textbf{Evaluation.}
We evaluate instruction following on \ifeval{}~\citep{zhou2023instructionfollowingevaluationlargelanguage}, which measures compliance with automatically checkable natural-language instructions. \ifeval{} covers 25 verifiable instruction types, including length, keyword, formatting, casing, punctuation, and structural constraints, and the released English dataset contains 541 prompts with one or more instructions per prompt. This makes it a useful testbed for precise instruction following without requiring subjective human evaluation.

As a sanity check for capability regression, we also evaluate on three math reasoning benchmarks outside the instruction-following training domain: MATH500~\citep{hendrycks2021measuring}, AIME 2024, and AIME 2025. Because the training data contains no math problems, these benchmarks let us check whether RL training on instruction-following data degrades, preserves, or incidentally improves mathematical reasoning.

\textbf{Baselines.}
We compare against the initial Command R7B policy before RL training.
For ablations where rule-based training is computationally feasible, we also include an oracle RLVR reference that uses the deterministic \ifeval-style verifier on the same training data.
This oracle removes both checklist-generation error and verifier noise, and therefore serves as an upper reference for learned checklist rewards rather than as the main deployed setting.

\section{Results}
\label{sec:results}

We evaluate three questions: whether learned checklist rewards can improve instruction following, how performance depends on verifier and checklist quality, and whether self-verification can be stabilized when the generator and verifier share parameters. We first report the main \softrlvr{} results using external LLM verifiers, then analyze verifier and checklist quality, followed by self-verification and a comparison between checklist-based and holistic verification.

Table~\ref{tab:main-results} summarizes the main external-verifier results. We train Command R7B with checklist-based RL using DeepSeek R1 checklists and LLM-based soft verification, then evaluate on IFEval, our target instruction-following benchmark, and on three out-of-domain math benchmarks: MATH500, AIME 2024, and AIME 2025. Unless mentioned otherwise, we use a single verifier pass ($J=1$) for grading the checklist items and do not train the verifier.

\begin{table}[!h]
\centering
\caption{\textbf{Results on instruction following and math reasoning.} The baseline is Command R7B before RL training. Soft-RLVR rows use DeepSeek R1 checklists and checklist-based soft rewards from a 20B or 120B LLM verifier. The rule-based verifier is an oracle reference. Bold indicates the best learned-verifier result. All models are evaluated at training step 500, the No RL row is the initial policy. Scores are percentages. Math benchmarks are held out from RL training, so any math gains measure transfer rather than direct optimization.}
\label{tab:main-results}
\scalebox{0.92}{
\begin{tabular}{lcccc}
\toprule
\textbf{Reward} & \textbf{\ifeval{}} & \textbf{MATH500} & \textbf{AIME24} & \textbf{AIME25} \\
\midrule
No RL & 73.89 & 87.60 & 35.67 & 27.33 \\
\midrule
Soft-RLVR, GPT-OSS-20B  & 84.20 & 89.60 & 39.33 & \textbf{28.67} \\
Soft-RLVR, GPT-OSS-120B & \textbf{85.00} & \textbf{90.20} & \textbf{40.00} & 28.50 \\
\midrule
Oracle rule-based & 88.92 & 87.80 & 36.83 & 28.00 \\
\bottomrule
\end{tabular}
}
\end{table}

Checklist-based soft verification substantially improves instruction following. \ifeval{} increases from 73.89 to 84.20 with the GPT-OSS-20B verifier and to 85.00 with the GPT-OSS-120B verifier, corresponding to gains of +10.3 and +11.1 points, respectively. 
These gains are obtained using generated checklists and imperfect LLM-based verifier rewards, without access during training to the rule-based verifier or to the true constraint annotations associated with each prompt. 
This supports one of our central claims: generated checklists and learned verifiers can drive large RL gains in instruction following, even without rule-based rewards during training.
(Appendix~\ref{app:checklist-distribution} reports the distribution of generated checklist lengths used in these runs.)

The oracle rule-based verifier provides an upper reference for this controlled setting: it obtains the highest \ifeval{} score, as expected, but relies on deterministic rule-based ground-truth checklists that might not be available in practice.

Despite training exclusively on instruction-following prompts and no math data, \softrlvr{} also improves math benchmark performance, indicating domain transfer: MATH500 improves by 2.0--2.6 points and AIME 2024 improves by 3.7--4.3 points across verifier scales. AIME 2025 gains are modest, around one point, and should be interpreted cautiously given the small benchmark size of 30 problems. The absence of math regression is notable in itself, as targeted post-training can degrade out-of-domain capabilities; we provide an SFT comparison in Appendix~\ref{app:sft-vs-rl}.

Increasing the verifier scale from GPT-OSS-20B to GPT-OSS-120B gives consistent but modest additional gains: +0.8 on \ifeval{}, +0.6 on MATH500, and +0.7 on AIME 2024, while AIME 2025 remains essentially unchanged. This suggests that the checklist-based reward formulation is robust to moderate variation in verifier quality. We analyze the relationship between verifier quality and downstream training outcomes more systematically in Section~\ref{sec:verifier-quality}.

\subsection{Disentangling Verifier and Checklist Quality}
\label{sec:verifier-quality}

\softrlvr{} depends on two learned components: the checklist used to decompose the prompt into criteria, and the verifier used to judge whether each criterion is satisfied. 
To disentangle their effects, we keep the RL pipeline fixed while varying both the checklist source and the verifier model. 
We compare checklists generated by the Initial Policy and DeepSeek R1, as well as a metadata setting that directly uses the ground-truth constraints stored in the training data as checklist items. 
The metadata setting removes checklist-generation error allowing us to isolate the impact of verifier imperfections. 
As an upper-bound reference, we also include training with the deterministic \ifeval{} rule-based verifier, which removes both checklist error and verifier noise. To reduce computational costs, all runs in this ablation are evaluated at training step 200 rather than step 500.

\begin{figure}[!h]
    \centering
    \includegraphics[width=0.8\textwidth]{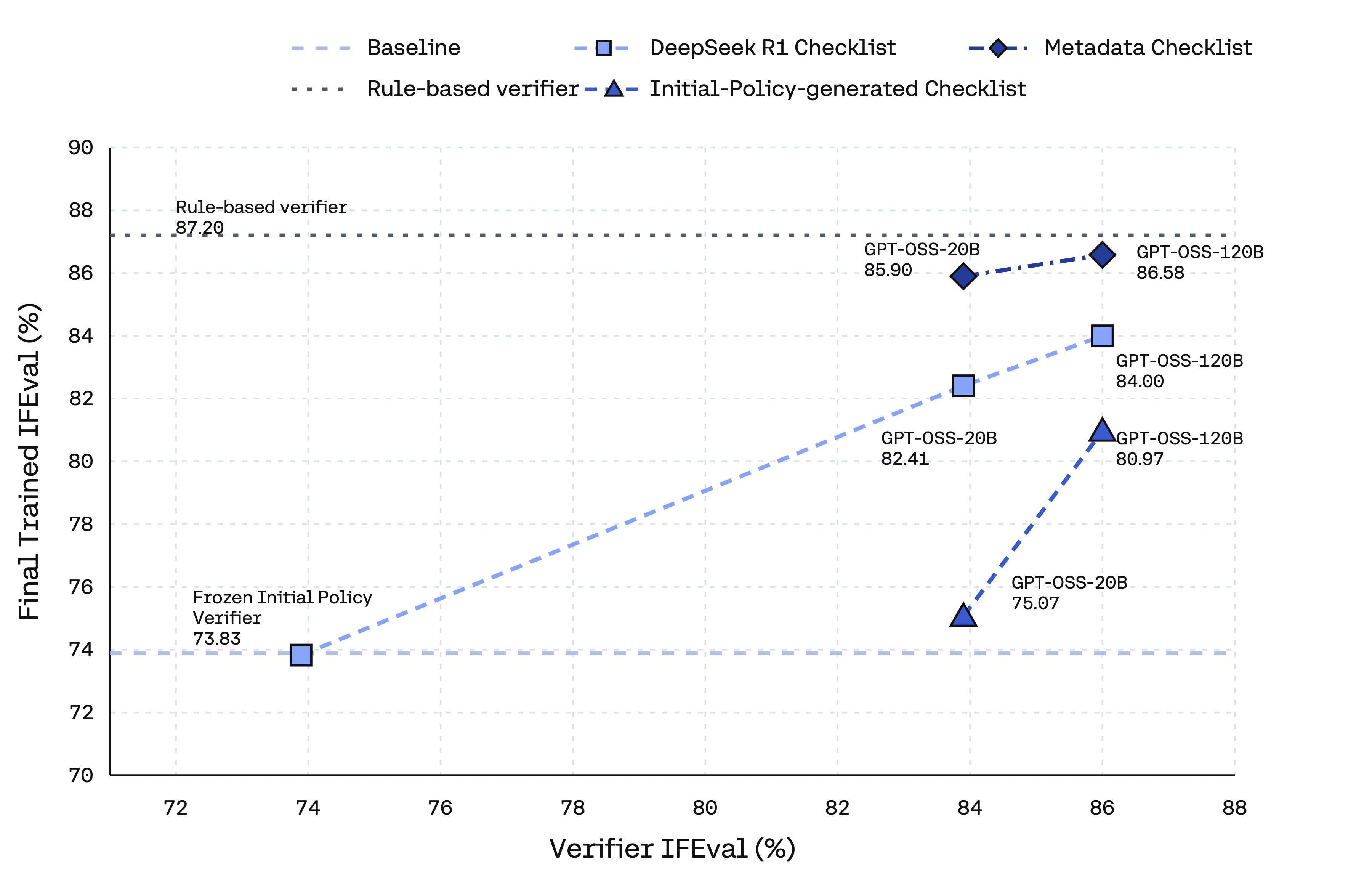}
    \caption{\textbf{Effect of verifier and checklist quality on downstream \ifeval{} performance.} The x-axis shows the verifier model's \ifeval{} score, used as a proxy for verifier quality; the y-axis shows the trained generator's \ifeval{} score. Curves compare various checklist sources: Initial Policy checklists, DeepSeek R1 checklists, and metadata checklists using the true constraints in the training data. Horizontal reference lines mark the pre-training baseline and the deterministic \ifeval{} rule-based verifier reference, which removes both checklist error and verifier noise. 
    }
    \label{fig:checklist_verifier_quality_line}
\end{figure}

Figure~\ref{fig:checklist_verifier_quality_line} shows that both checklist quality and verifier quality affect downstream RL performance. Holding the DeepSeek R1 checklist source fixed, stronger verifiers produce better trained generators. Using the initial policy itself as verifier yields essentially no improvement over the baseline: the trained model reaches 73.83\% \ifeval{} compared with the 73.89\% starting point. Replacing this verifier with GPT-OSS-20B raises \ifeval{} to 82.41\%, and GPT-OSS-120B raises it further to 84.00\%. This comparison isolates the effect of verifier quality under a fixed generated checklist.

Holding the verifier fixed isolates checklist quality. At both verifier scales, DeepSeek R1 checklists outperform Initial Policy checklists: 82.41\% vs. 75.07\% with GPT-OSS-20B (+7.34 points), and 84.00\% vs. 80.97\% with GPT-OSS-120B (+3.03 points). Using true training metadata constraints as checklists improves performance further, reaching 85.90\% with GPT-OSS-20B and 86.58\% with GPT-OSS-120B. These metadata checklists are not generally available, but they quantify headroom from better checklist generation. Our results are consistent with prior work showing that the quality of evaluation criteria can affect downstream optimization performance~\citep{gunjal2025rubrics}.

The deterministic rule-based verifier reaches 87.20\%, providing a reference point with both the ideal checklist and a noise-free verifier. The remaining gap between the metadata-checklist runs and this oracle run estimates the residual cost of verifier imperfection after checklist error is removed. Together, these results show that effective checklist-based soft RL requires both a sufficiently accurate verifier and well-specified checklist items.

\subsection{Self-Verification}
\label{sec:self-verification-results}

We next evaluate \softsverl{}, where the generator and verifier share parameters. This setting removes the need for a separate online reward model, but it also introduces the collapse mode described in Section~\ref{sec:self-verification}: the shared verifier can become more permissive on the generator's own samples, producing inflated rewards instead of better responses. We therefore systematically ablate the stabilization components in the shared objective from Eq.~\ref{eq:shared_objective}. All variants in this section are trained for 100 training steps.

\begin{figure}[!h]
    \centering
    \includegraphics[width=0.8\textwidth]{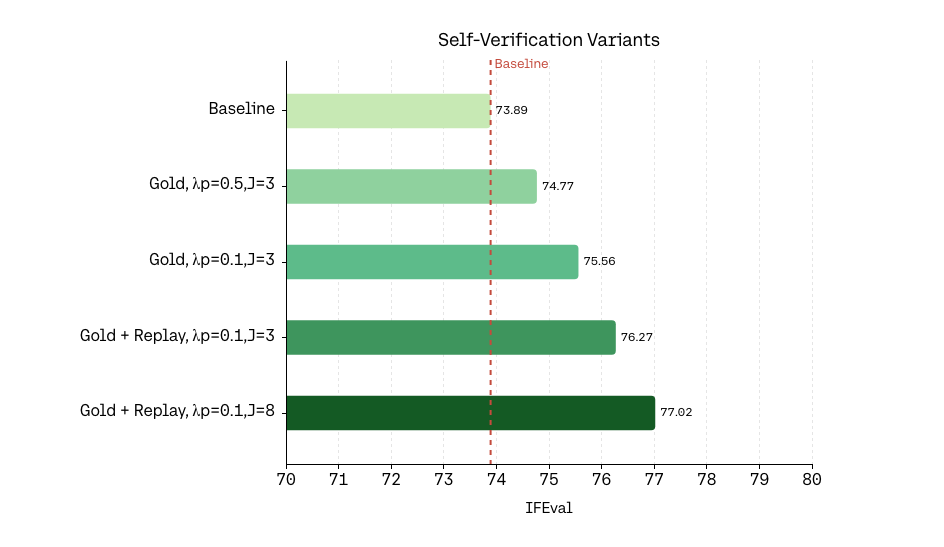}
    \caption{\textbf{Self-verification ablations on \ifeval{}.}
    Self-verification benefits from verifier co-training, replay data, multiple verifier votes, and a tuned partition penalty. Setting the partition weight too high hinder performance.}
    \label{fig:self-verification-variants}
\end{figure}

Figure~\ref{fig:self-verification-variants} shows that the stabilization components are important for effective self-verification.
For replay admission, we use thresholds $\tau_+=0.75$ and $\tau_-=0.375$.
With gold verifier-improvement examples, $J=3$ verifier votes per checklist item, and $\lambda_p=0.5$, \softsverl{} reaches 74.77\% \ifeval{}, a modest gain over the 73.89\% baseline.
Reducing the partition weight to $\lambda_p=0.1$ raises the final \ifeval{} to 75.56\%, indicating that the relative strengths of $\mathcal{J}_{\mathrm{ver}}$ and $\mathcal{J}_{\mathrm{part}}$ matter in shared-parameter training.
A partition penalty that is too large may overcorrect the verifier, suppressing positive judgments needed for useful reward.

Adding on-policy replay gives a larger improvement.
With the same $J=3$ vote count, including tagged examples from the current run increases \ifeval{} to 76.27\%.
This supports the role of $\mathcal{D}_{\mathrm{Replay}}$: gold examples anchor the verifier's notion of checklist satisfaction, while on-policy examples expose the verifier to the current generator's actual failure modes.
Increasing the verifier vote count from $J=3$ to $J=8$ further improves the gold-plus-replay setting from 76.27\% to 77.02\%, consistent with reduced sampling noise in the empirical vote fraction $\hat{p}_{ik}$.
We do not interpret this limited range as a monotonic scaling law.
Overall, self-verification benefits from the full \softsverl{} stabilization recipe: verifier co-training, on-policy replay, multiple verifier votes, and a tuned anti-collapse penalty.

\subsection{Checklist vs.\ Holistic Verification}
\label{sec:checklist-vs-holistic}

The analysis in Section~\ref{sec:checklist-mse} predicts that checklist verification should help when decomposition reduces verifier noise without introducing too much partial-credit bias. In particular, checklist rewards are expected to be most useful when item-level judgments are more sensitive than holistic judgments and when item-level specificity is not substantially worse. We test this prediction by comparing checklist-based verification against holistic verification, where the verifier evaluates the full response against the entire prompt in a single judgment.

To estimate the verifier quantities in the theory, we generate completions from the initial policy and grade the same completions with each GPT-OSS verifier. We use the \ifeval{} verifier to obtain holistic ground-truth labels for estimating single-judgment sensitivity and specificity, $p$ and $q$. For checklist estimates, we use the \ifeval{} constraints as checklist items and compare each LLM verifier's item-level decisions against the corresponding constraint labels, yielding $p'$ and $q'$.

\begin{table}[!h]
\centering
\caption{\textbf{Estimated holistic and checklist verifier quality on initial-policy completions.}
Checklist estimates use \ifeval{} constraints as item-level ground truth. ``Bias condition'' reports the fraction of samples for which the partial-credit bias is small enough for checklist verification to satisfy the sufficient condition in Proposition~\ref{prop:checklist-bias}.}
\label{tab:verifier-quality-estimates}
\small
\setlength{\tabcolsep}{4pt}
\begin{tabular}{lccccccccc}
\toprule
Verifier & $p$ & $p'$ & $q$ & $q'$ & $p'-p$ & $q'-q$ & $\alpha$ & $\alpha'$ & Bias cond. (\%) \\
\midrule
GPT-OSS-20B  & 0.477 & 0.873 & 0.662 & 0.717 & +0.396 & +0.055 & 0.139 & 0.590 & 81.5 \\
GPT-OSS-120B & 0.552 & 0.860 & 0.709 & 0.712 & +0.308 & +0.003 & 0.261 & 0.572 & 81.5 \\
\bottomrule
\end{tabular}
\end{table}

Table~\ref{tab:verifier-quality-estimates} shows that both checklist verifiers have much higher item-level sensitivity than holistic sensitivity (here strictly, $p' > p$), while item-level specificity is similar to or slightly higher than holistic specificity ($q' \ge q$). 
This matches the theory's main prediction: checklist decomposition is most helpful when it improves sensitivity and reduces variance without substantially weakening specificity.

\begin{figure}[!h]
    \centering
    \begin{minipage}[t]{0.5\textwidth}
        \centering
        \includegraphics[width=\textwidth,trim=60 20 110 25,clip]{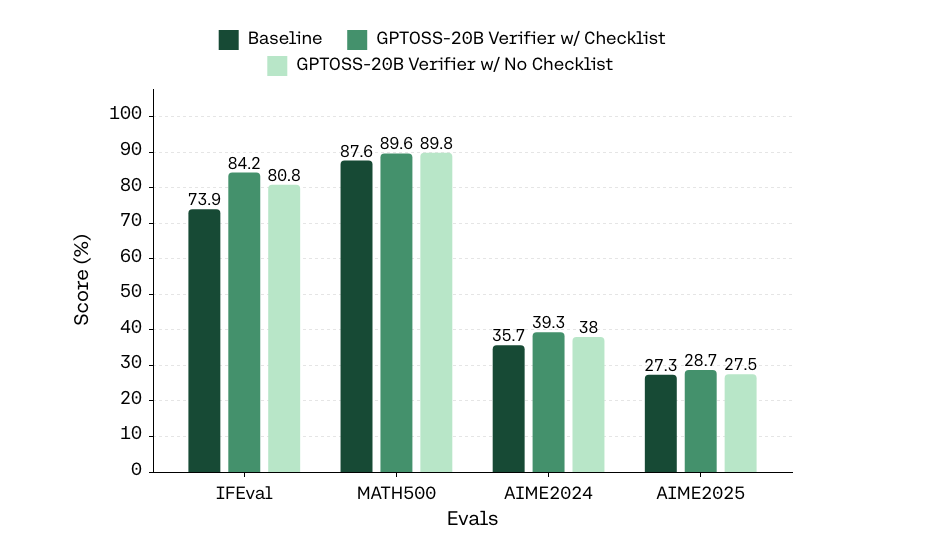}
        \vspace{-0.3em}
        \caption*{(a) GPT-OSS-20B verifier}
    \end{minipage}%
    \begin{minipage}[t]{0.5\textwidth}
        \centering
        \includegraphics[width=\textwidth,trim=110 20 35 25,clip]{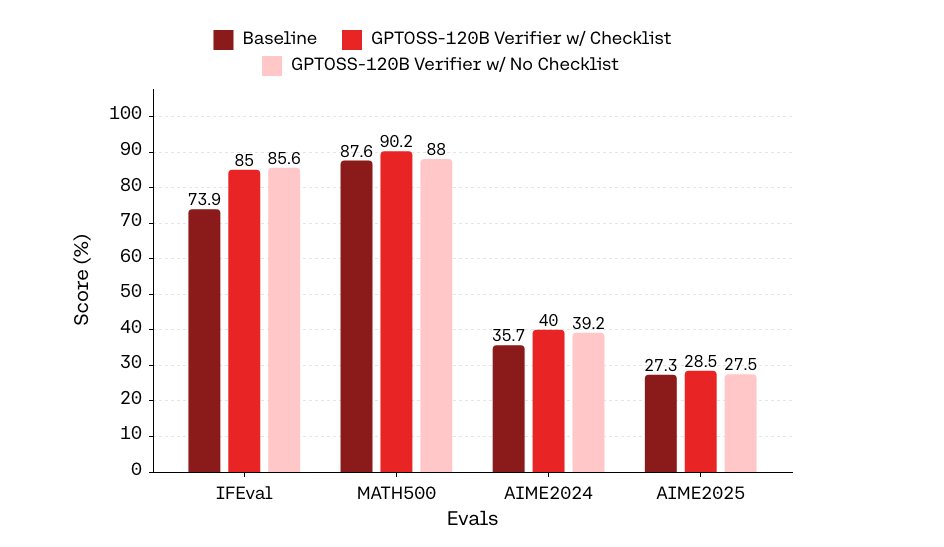}
        \vspace{-0.3em}
        \caption*{(b) GPT-OSS-120B verifier}
    \end{minipage}
    \caption{\textbf{Checklist-based vs.\ holistic (non-checklist) verification for two verifier scales.} 
    With the noisier GPT-OSS-20B verifier, checklist decomposition improves \ifeval{} substantially. With the stronger GPT-OSS-120B verifier, the \ifeval{} gap closes, though checklist-based training retains an edge on math benchmarks.}
    \label{fig:checklist_vs_nonchecklist}
\end{figure}

Figure~\ref{fig:checklist_vs_nonchecklist} compares checklist-based and holistic training at two verifier scales. With GPT-OSS-20B, checklist-based training outperforms holistic training by 3.4 points on \ifeval{} (84.20\% vs.\ 80.80\%), consistent with the prediction that decomposition helps most when verifier judgments are noisy. With GPT-OSS-120B, holistic training slightly exceeds checklist-based training on \ifeval{}, though the gap is small (85.58\% vs.\ 85.00\%), suggesting that stronger holistic judgments reduce the variance-reduction benefit of decomposition.

This result does not contradict the low holistic sensitivity in Table~\ref{tab:verifier-quality-estimates}, because that estimate measures agreement with the deterministic \ifeval{} rule-based label. Rule-based checks capture operational constraints, while holistic LLM verifiers may also judge the semantic content of the response. For example, consider the prompt: ``List 2 science fiction books with their authors. End the response with Happy Reading!'' The response ``$*$ 1984 by George Orwell; $*$ Brave New World by Aldous Huxley; Happy Reading!'' may satisfy the rule-based checks for two listed items and the required closing phrase, but a holistic verifier may reject it because these works are often classified more specifically as political fiction rather than as science fiction. Such disagreements count as false negatives relative to the rule-based label and lower the estimated holistic sensitivity, even though the holistic reward can still provide a useful training signal.

Checklist-based training still performs better on math benchmarks at both verifier scales. With GPT-OSS-120B, it reaches 90.20\% on MATH500 versus 88.00\% for holistic verification, and 40.00\% on AIME 2024 versus 39.17\%. One possible explanation is that graded partial credit provides more informative reward shaping than a binary holistic judgment, especially for tasks where partial progress matters.

\section{Related Work}
\label{sec:related_work}

\textbf{RL beyond exact verifiers.}
Recent work extends reinforcement learning beyond standard verifiable domains by replacing formal verification with weaker supervision signals. One line of work removes explicit rule-based verifiers but still relies on gold final answers or reference outcomes, for example by rewarding answer likelihood or reference-conditioned reasoning quality~\citep{zhou2025verifree,yu2025rlpr,chen2024latro,tang2025beyond}. 
These methods are important contrasts, but they do not address our target setting because they still assume access to a correct target answer or reference outcome. In contrast, \softrlvr{} targets tasks where no single answer string is available and correctness must instead be assessed through prompt-specific criteria.

\textbf{Self-generated and label-light rewards.}
A second line of work reduces labeled supervision through intrinsic or self-generated signals, including TTRL, Genius, and Learning to Reason without External Rewards~\citep{zuo2025ttrl,xu2025genius,zhao2025intuitor}. TTRL is especially relevant because it trains from unlabeled prompts and self-generated rewards rather than gold answers. 
However, its reward is based on majority-vote pseudo-labels on answer-extractable reasoning benchmarks. Current evidence does not show that the same mechanism extends to open-ended or checklist-style evaluation, where responses can satisfy some requirements but not others. These methods are therefore closer to \softsverl{} in being label-light than in addressing the same verification problem.

\textbf{Rubrics, checklists, and learned judges.}
Prior work has used rubrics, checklists, or learned judges to evaluate open-ended model outputs~\citep{gunjal2025rubrics,jia2025writingzero,jayalath2025compute}. \citet{gunjal2025rubrics} is the closest checklist-style prior: it constructs rubrics for non-verifiable tasks and scores outputs with an external judge. Our focus is different. We use checklist decomposition to construct dense, interpretable RL rewards and analyze when these rewards provide a reliable training signal. This extends the principle behind RLVR beyond domains with hand-written verifiers: the model is still evaluated against explicit criteria, but the checker is an LLM verifier rather than a symbolic rule or unit test.

More broadly, rubric-based approaches are closest to what we call a rubric-bound setting, where the main challenge is specifying useful criteria for open-ended tasks. \softrlvr{} instead targets a verifier-bound setting: even when evaluation criteria can be written as checklist items, the central problem is how to obtain stable and useful reward from an imperfect verifier. 

\textbf{Self-verifying RL.}
Reasoning over Mathematical Objects~\citep{aggarwal2026mathematical} is close to our work on the verifier side because it studies strong LM judges, on-policy judge training, and hard-to-verify mathematical-object equivalence. However, its setting remains centered on externally trained judges for mathematically structured outputs. URPO~\citep{lu2025urpo} is also relevant because the generator and evaluator co-evolve within one training loop, though it remains grounded in preference data and verifiable reasoning tasks. In contrast, \softsverl{} combines checklist-conditioned soft rewards, repeated verifier voting, verifier co-training from gold and replay supervision, and explicit anti-collapse regularization for self-verification.

\begin{table}[!ht]
\centering
\setlength{\tabcolsep}{2.5pt}
\renewcommand{\arraystretch}{1.15}
\caption{Supervision assumptions of closely related RL methods. ``Gold answer'' denotes reliance on a known final answer or reference outcome, not item-level verifier labels or preference data; ``Fixed Judge/RM'' denotes use of a fixed external judge or separately trained reward model. Partial entries indicate that the property applies only to part of the method or evaluation setting.}
\label{tab:related-work-comparison}
\footnotesize
\scalebox{0.95}{%
\begin{tabular}{@{}p{0.31\textwidth}ccccc@{}}
\toprule
Method & \mbox{Gold Answer} & \mbox{Fixed Judge/RM} & \mbox{Open-Ended} & Rubric & \mbox{Self Verification} \\
\midrule
Answer-likelihood RL \citep{zhou2025verifree,yu2025rlpr,chen2024latro,tang2025beyond}
& Yes & No & Yes & No & No \\
TTRL \citep{zuo2025ttrl}
& No & No & No & No & No \\
INTUITOR / GENIUS \citep{zhao2025intuitor,xu2025genius}
& No & No & Yes & No & No \\
Writing-Zero \citep{jia2025writingzero}
& No & Yes & Yes & No & No \\
URPO \citep{lu2025urpo}
& Mixed & No & Yes & No & Yes \\
Compute as Teacher \citep{jayalath2025compute}
& No & Yes & Yes & Yes & No \\
Rubrics as Rewards \citep{gunjal2025rubrics}
& Yes & Yes & Yes & Yes & No \\
Reasoning over Mathematical Objects \citep{aggarwal2026mathematical}
& Yes & Yes & Yes & No & No \\
\midrule
\textbf{Soft-SVeRL (ours)}
& \textbf{No} & \textbf{No} & \textbf{Yes} & \textbf{Yes} & \textbf{Yes} \\
\bottomrule
\end{tabular}%
}
\end{table}

\section{Conclusion}

We showed that learned LLM verifiers producing graded, checklist-based rewards can serve as effective training rewards for instruction-following RL. Training Command R7B with checklist-based \softrlvr{} improves \ifeval{} by up to 11.1 points, with no degradation on out-of-distribution math benchmarks. Decomposing prompts into per-item checklist judgments improves over holistic verification when verifier judgments are noisier, consistent with our theoretical MSE analysis. Our ablations further show that both verifier quality and checklist quality matter, and that self-verification requires explicit safeguards: \softsverl{} benefits from verifier co-training, replay, and anti-collapse regularization.

Several limitations qualify these findings. Checklist decomposition assumes that prompt requirements can be expressed as discrete, atomic pass-fail items, which may not hold for more subjective or holistic tasks. Our evaluation is also limited to one base model scale and one training domain. For self-verification, we also assume access to a set of gold-labeled verifier training data as an anchor for the verifier.

Future work should extend soft verification to less controlled settings such as open-ended writing and multi-turn dialogue, test self-verification at larger model scales and longer training horizons, and improve checklist generation without relying on a strong external model. Overall, checklist-based soft verification offers a path toward broadening RLVR beyond exactly verifiable domains by turning partially verifiable instructions into dense, actionable reward signals.

\bibliography{iclr2025_conference}
\clearpage
\appendix
\renewcommand{\proofname}{\textbf{Proof}}

\section{Appendix}

\subsection{Training Hyperparameters}
\label{app:training-hparams}

All runs use Group Relative Policy Optimization \citep{grpo} on a FAX-based \citep{yoo2022fax} asynchronous RL training framework. We sweep learning rates and find that $5 \times 10^{-6}$ provides the best balance of stability and performance across our runs. Unless stated otherwise, we train for 500 steps with a batch size of 256, group size 8, and maximum sequence length of 16{,}384 tokens. We use asymmetric clipping bounds from DAPO \citep{dapo}, with $\epsilon_{\text{low}} = 0.1$ and $\epsilon_{\text{high}} = 0.2$, and no KL regularization. For the verifier models, we use the highest thinking budget setting up to 32k tokens.

\subsection{SFT vs.\ RL}
\label{app:sft-vs-rl}

A natural alternative to RL with learned verifiers is supervised fine-tuning (SFT) on high-quality outputs from a stronger model. We compare RL training with a 20B verifier against SFT on outputs generated by GPT-OSS-120B in the high reasoning setting, giving SFT an advantage in both model scale and generation quality.

\begin{figure}[!h]
    \centering
    \includegraphics[width=0.8\textwidth]{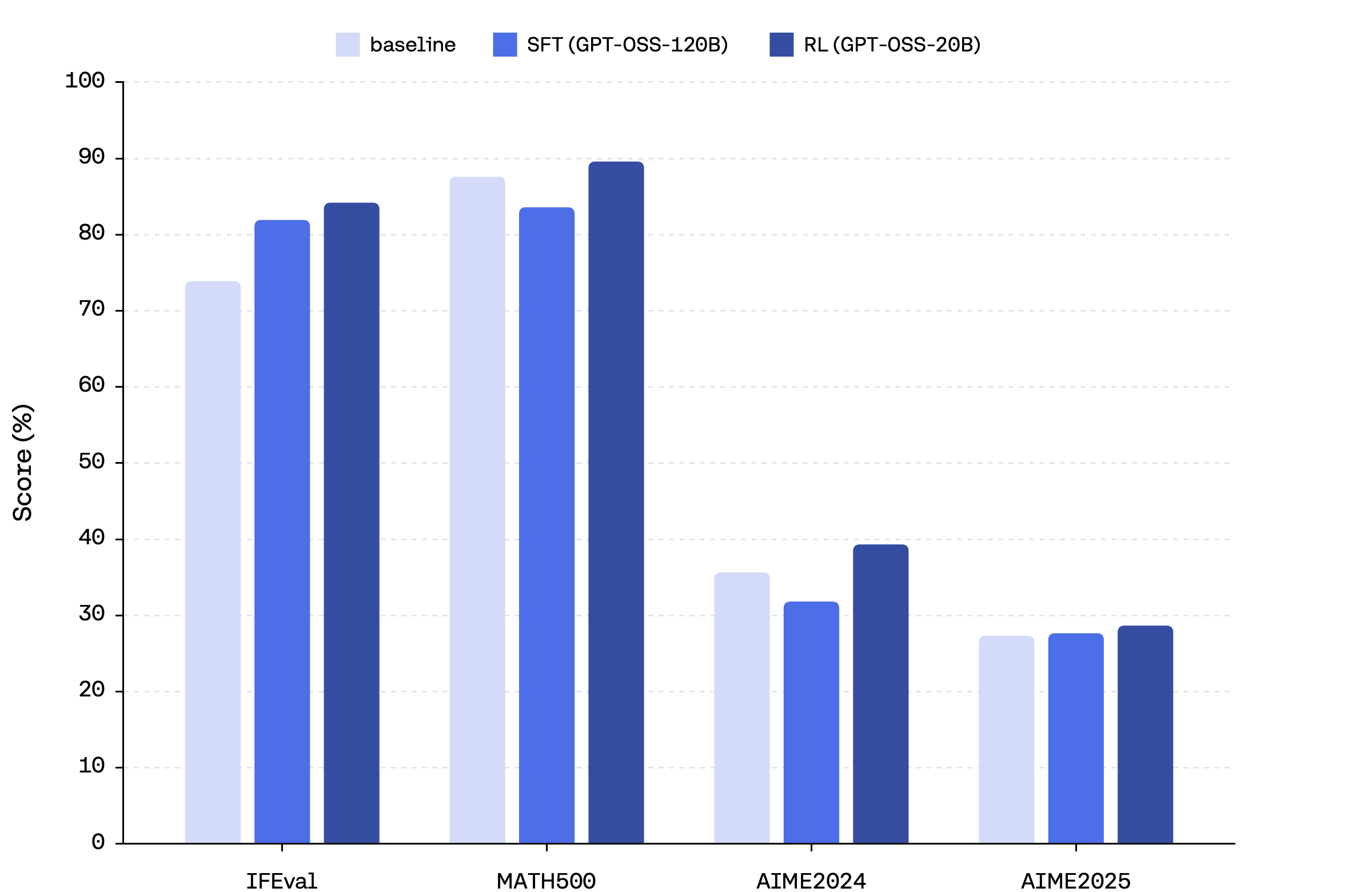}
    \caption{SFT on GPT-OSS-120B generated outputs vs.\ RL with GPT-OSS-20B verifier rewards. RL outperforms SFT on all benchmarks despite using a 6$\times$ smaller verifier model, and avoids the math degradation that SFT exhibits.}
    \label{fig:sft_vs_rl}
\end{figure}

Figure~\ref{fig:sft_vs_rl} shows that RL with the 20B verifier outperforms SFT with the 120B verifier on IFEval (84.20 vs.\ 81.93, +2.3 points) despite using a 6$\times$ smaller verifier model. The gap is more pronounced on math benchmarks. SFT degrades MATH500 from 87.60 to 83.60 ($-$4.0 points) and AIME 2024 from 35.67 to 31.83 ($-$3.8 points), while RL improves both (89.60 and 39.33, respectively).

The math degradation under SFT is consistent with the well-documented pattern of capability regression during supervised post-training: fine-tuning on domain-specific data can overwrite previously learned capabilities. RL avoids this because policy-gradient updates adjust the model toward higher-reward behavior without forcing it to match specific output distributions, preserving useful capabilities that are not penalized by the reward. This is consistent with recent evidence that RL training produces sparse, low-rank updates to model parameters, requiring far less capacity than supervised fine-tuning \citep{schulman2025lora}.

\subsection{Prompt Templates}
\label{app:prompt-templates}

\subsubsection{Verification Prompt Template}
\label{app:verification-prompt}

The following template is used to query the verifier model for each checklist item. The verifier receives the original instruction, the candidate response, and a single checklist question, and outputs a binary \texttt{yes}/\texttt{no} judgment.

\begin{verbatim}
Here is an instruction given to an AI Assistant:

<START OF INSTRUCTION>
{instruction}
<END OF INSTRUCTION>

Here is an AI Assistant's response to the instruction:

<START OF RESPONSE>
{response}
<END OF RESPONSE>

Please answer the following question about the AI
Assistant's response to the instruction:

<START OF QUESTION>
{checklist_question}
<END OF QUESTION>

Output "yes" or "no" and nothing else.
\end{verbatim}

\subsubsection{Checklist Generation Prompt Template}
\label{app:checklist-prompt}

The following template is used to generate checklist items from a prompt. The checklist generator produces an ordered list of yes/no questions that collectively cover the requirements of the instruction.

\begin{verbatim}
Here is an instruction given to an AI assistant:

<START OF INSTRUCTION>
{prompt}
<END OF INSTRUCTION>

Please write an exhaustive checklist of yes/no questions
that could be used to evaluate whether the AI assistant's
response to the question is completely correct. The answer
to each of the questions should be yes if and only if the
AI assistant's response is correct. Be sure to consider
edge cases if applicable. The checklist should not include
questions which are not absolutely essential for a
completely correct response even if the answer "yes" would
likely imply a better response. The ordering of the
checklist should be such that the questions are ordered
from the most important to the least important. Output the
checklist as a markdown numbered list. DO NOT output
anything else besides the numbered list.
\end{verbatim}

\subsubsection{Generated checklist length distribution}
\label{app:checklist-distribution}

Figure~\ref{fig:checklist-distribution} summarizes the number of checklist items produced by the LLM checklist generator across the training prompts. Most generated checklists contain between five and eight items, with a long tail of prompts receiving more detailed decompositions.

\begin{figure}[h!]
    \centering
    \includegraphics[width=0.75\textwidth]{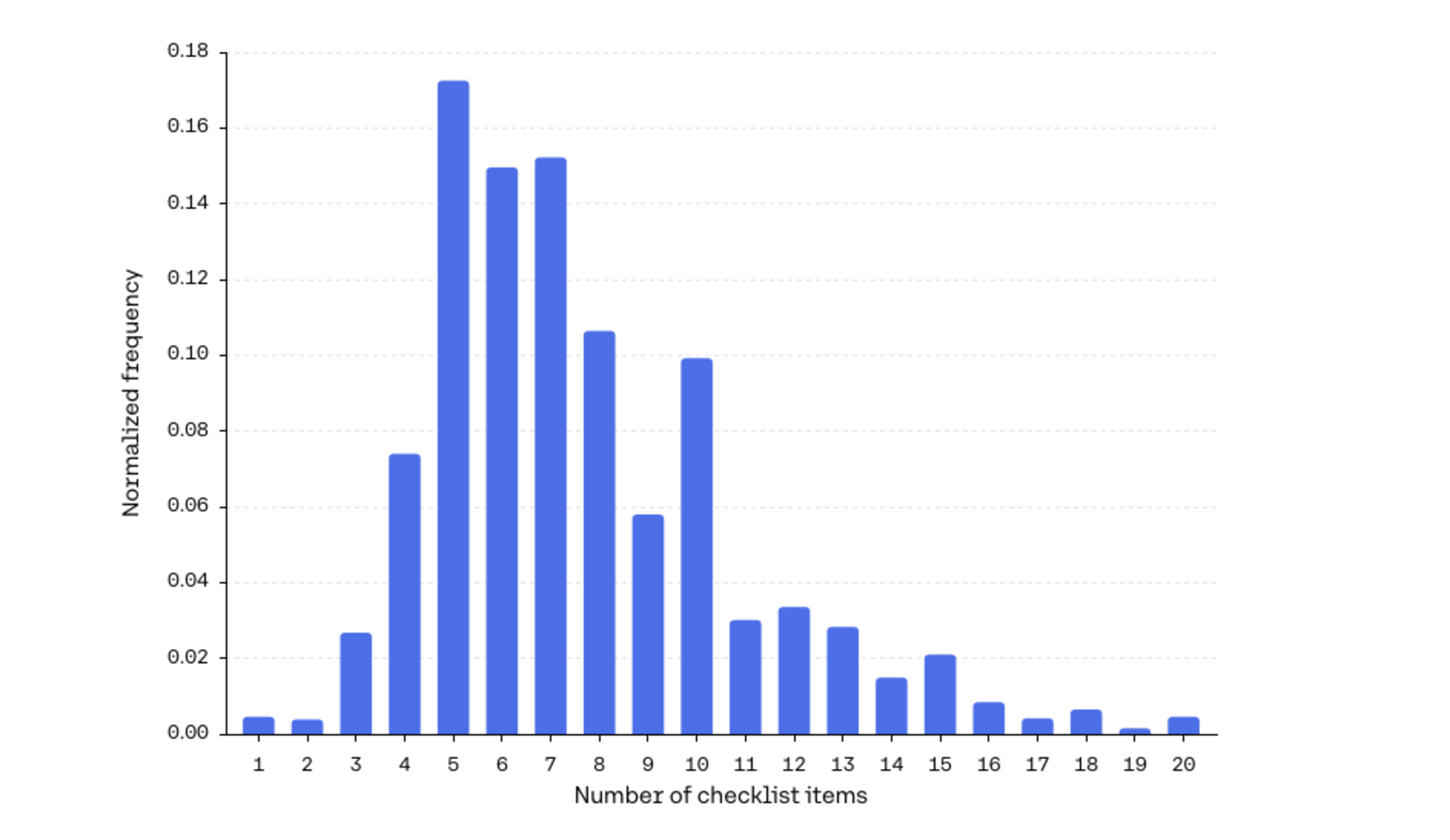}
    \caption{\textbf{Distribution of LLM-generated checklist lengths.} The x-axis gives the number of checklist items generated for a prompt, and the y-axis reports the normalized frequency.}
    \label{fig:checklist-distribution}
\end{figure}

\subsubsection{Checklist examples and failure modes}
\label{app:checklist-examples}

Table~\ref{tab:checklist-examples} shows representative checklist outputs. 
In general, useful checklist items are atomic, necessary, and directly verifiable from the instruction and candidate response. 
Common failure modes include under-decomposition, over-decomposition, spurious criteria, and missing constraints.

\begin{table}[h!]
\centering
\small
\begin{tabular}{p{0.28\linewidth}p{0.32\linewidth}p{0.32\linewidth}}
\toprule
Instruction & Good checklist items & Failure mode examples \\
\midrule
Write exactly three bullet points about Paris. Do not use the word ``travel''.
&
\begin{itemize}[leftmargin=*, nosep, topsep=-2pt]
    \item Does the response contain exactly three bullet points?
    \item Is each bullet point about Paris?
    \item Does the response avoid the word ``travel''?
\end{itemize}
&
\begin{itemize}[leftmargin=*, nosep, topsep=-2pt]
    \item Is the response engaging? \emph{(spurious)}
    \item Does the response mention Paris and use three bullet points?
    \emph{(under-decomposed)}
\end{itemize}
\\
\bottomrule
\end{tabular}
\caption{Illustrative checklist items and checklist-generation failure modes.}
\label{tab:checklist-examples}
\end{table}

\subsection{Additional Results and Proofs for Checklist MSE Analysis}
\label{app:additional-checklist-mse}

This section collects the auxiliary results used in Section~\ref{sec:checklist-mse}.
We use the notation from the main text: $\mathbf{Y}^*(x,y)$ denotes the latent checklist vector,
$S^*(x,y)$ denotes strict success, $\bar S^*(x,y)$ denotes partial credit, and
$\Delta^*(x,y)=\bar S^*(x,y)-S^*(x,y)$ denotes the relaxation gap.

\subsubsection{Proof of Proposition~\ref{prop:checklist-variance}}
\label{app:proof-prop-checklist-variance}

\begin{proof}
Under conditional independence, $\mathrm{Cov}(J_k,J_\ell\mid x,y)=0$ for all $k\ne \ell$.
The general variance formula from Proposition~\ref{prop:variance} reduces to
\[
\mathrm{Var}(\bar J\mid x,y)
=
\frac{1}{K^2}
\sum_{k=1}^K
\mu_k(x,y)\bigl(1-\mu_k(x,y)\bigr).
\]
Since $\mu_k(x,y)\in[0,1]$, we have $\mu_k(x,y)(1-\mu_k(x,y))\le 1/4$ for each $k$, so
\[
\mathrm{Var}(\bar J\mid x,y)
\le
\frac{1}{K^2}\cdot K\cdot \frac{1}{4}
=
\frac{1}{4K}.
\]
Multiplying by $\|s_\theta(x,y)\|_2^2$ gives
\[
\mathbb{E}\!\left[
\|g_{\mathrm{chk}}-\mathbb{E}[g_{\mathrm{chk}}\mid x,y]\|_2^2
\,\middle|\,x,y
\right]
\le
\frac{1}{4K}\|s_\theta(x,y)\|_2^2.
\]
\end{proof}

\subsubsection{Conditional bias and variance}
\label{app:prop-bias}

\begin{proposition}[Conditional bias of single-verifier and checklist estimators]
\label{prop:bias}
Under the setup of Section~\ref{sec:checklist-mse}, conditioning on $(x,y)$,
\[
\mathbb{E}[g_{\mathrm{single}}\mid x,y]-g^*(x,y)
=
\Bigl((1-q)+(\alpha-1)S^*(x,y)\Bigr)s_\theta(x,y),
\]
and
\[
\mathbb{E}[g_{\mathrm{chk}}\mid x,y]-g^*(x,y)
=
\Bigl((1-q')+\alpha'\Delta^*(x,y)+(\alpha'-1)S^*(x,y)\Bigr)s_\theta(x,y).
\]
Equivalently,
\[
\left\|\mathbb{E}[g_{\mathrm{single}}\mid x,y]-g^*(x,y)\right\|_2^2
=
\Bigl((1-q)+(\alpha-1)S^*(x,y)\Bigr)^2
\|s_\theta(x,y)\|_2^2,
\]
and
\[
\left\|\mathbb{E}[g_{\mathrm{chk}}\mid x,y]-g^*(x,y)\right\|_2^2
=
\Bigl((1-q')+\alpha'\Delta^*(x,y)+(\alpha'-1)S^*(x,y)\Bigr)^2
\|s_\theta(x,y)\|_2^2.
\]
\end{proposition}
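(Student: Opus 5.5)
The argument is linearity of conditional expectation, with everything conditioned on $(x,y)$. Once we condition, $S^*(x,y)$, $\mathbf{Y}^*(x,y)$, $b(x)$ and $s_\theta(x,y)$ are all deterministic. The only random quantities are the verifier outputs $J$ and $J_1,\dots,J_K$. Since each estimator in \eqref{eq:gradient-estimators} is a scalar times the fixed vector $s_\theta(x,y)$, we have
\[
\E[g_{\mathrm{single}}\mid x,y]=\bigl(\E[J\mid x,y]-b(x)\bigr)s_\theta(x,y),
\qquad
\E[g_{\mathrm{chk}}\mid x,y]=\bigl(\E[\bar J\mid x,y]-b(x)\bigr)s_\theta(x,y).
\]
Subtracting $g^*(x,y)=(S^*-b(x))s_\theta(x,y)$ cancels the baseline in both cases. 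Each bias is therefore $(\E[\cdot\mid x,y]-S^*)\,s_\theta(x,y)$, and the task reduces to computing two scalar conditional means.

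\textbf{Single verifier.} I will take the verifier model \eqref{eq:holistic-verifier-expectation} to hold conditionally on $(x,y)$, i.e.\ the verifier's behavior depends on $(x,y)$ only through $S^*$. Then
\[
\E[J\mid x,y]-S^*=(1-q)+\alpha S^*-S^*=(1-q)+(\alpha-1)S^*.
\]

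\textbf{Checklist verifier.} Using \eqref{eq:checklist-verifier-expectation} item by item and averaging gives
\[
\E[\bar J\mid x,y]=(1-q')+\alpha'\bar S^*.
\]
Now substitute $\bar S^*=S^*+\Delta^*$ from \eqref{eq:relaxation-gap}. This yields
\[
\E[\bar J\mid x,y]-S^*=(1-q')+\alpha'\Delta^*+(\alpha'-1)S^*,
\]
which is the stated checklist bias. Conditional independence of the $J_k$ is not needed here; only linearity is used.

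\textbf{Squared-norm forms.} These follow from $\|c\,v\|_2^2=c^2\|v\|_2^2$ for a scalar $c$.

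\textbf{Main obstacle.} The only delicate point is interpretive. The rates $p,q,p',q'$ are defined as marginal quantities, but here they must be read as conditional means given $(x,y)$. That is, $\E[J\mid x,y]=\E[J\mid S^*(x,y)]$, and similarly for each $J_k$ given $Y_k^*$. I will state this explicitly as the setup's modeling assumption. This is the same convention already used to define $\mu_k(x,y)$ in Proposition~\ref{prop:checklist-variance}.
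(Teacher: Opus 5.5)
Your proposal is correct and follows essentially the same route as the paper's proof. Both condition on $(x,y)$ so that $b(x)$ and $s_\theta(x,y)$ are fixed, cancel the baseline against $g^*$, compute $\E[J\mid x,y]$ and $\E[\bar J\mid x,y]=(1-q')+\alpha'\bar S^*$ by linearity, substitute $\bar S^*=S^*+\Delta^*$, and take squared norms. Your remark that the verifier model must be read conditionally on $(x,y)$, i.e.\ $\E[J\mid x,y]=(1-q)+\alpha S^*(x,y)$, states explicitly an assumption the paper's proof uses without comment.
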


\begin{proof}
Fix $(x,y)$; then $s_\theta(x,y)$ and $b(x)$ are deterministic.
For the single verifier,
\[
\mathbb{E}[g_{\mathrm{single}}\mid x,y]
=
\bigl(\mathbb{E}[J\mid x,y]-b(x)\bigr)s_\theta(x,y).
\]
Using
\[
\mathbb{E}[J\mid x,y]=(1-q)+\alpha S^*(x,y)
\]
and subtracting
\[
g^*(x,y)=\bigl(S^*(x,y)-b(x)\bigr)s_\theta(x,y)
\]
gives
\[
\mathbb{E}[g_{\mathrm{single}}\mid x,y]-g^*(x,y)
=
\Bigl((1-q)+\alpha S^*(x,y)-S^*(x,y)\Bigr)s_\theta(x,y)
=
\Bigl((1-q)+(\alpha-1)S^*(x,y)\Bigr)s_\theta(x,y).
\]

For the checklist estimator, linearity of expectation gives
\[
\mathbb{E}[\bar J\mid x,y]
=
\frac{1}{K}\sum_{k=1}^K
\bigl((1-q')+\alpha'Y_k^*(x,y)\bigr)
=
(1-q')+\alpha'\bar S^*(x,y).
\]
Subtracting $g^*(x,y)$ and using $\bar S^*(x,y)=S^*(x,y)+\Delta^*(x,y)$ yields
\[
\mathbb{E}[g_{\mathrm{chk}}\mid x,y]-g^*(x,y)
=
\Bigl((1-q')+\alpha'\bar S^*(x,y)-S^*(x,y)\Bigr)s_\theta(x,y)
\]
\[
=
\Bigl((1-q')+\alpha'\Delta^*(x,y)+(\alpha'-1)S^*(x,y)\Bigr)s_\theta(x,y).
\]
Taking squared norms gives the stated formulas.
\end{proof}

\subsubsection{Conditional variance}
\label{app:prop-variance}

\begin{proposition}[Conditional variance of single-verifier and checklist estimators]
\label{prop:variance}
Under the setup of Section~\ref{sec:checklist-mse}, conditioning on $(x,y)$,
\[
\mathrm{Cov}(g_{\mathrm{single}}\mid x,y)
=
\mathrm{Var}(J\mid x,y)\,s_\theta(x,y)s_\theta(x,y)^\top,
\]
and
\[
\mathrm{Cov}(g_{\mathrm{chk}}\mid x,y)
=
\mathrm{Var}(\bar J\mid x,y)\,s_\theta(x,y)s_\theta(x,y)^\top,
\]
where
\[
\mathrm{Var}(J\mid x,y)
=
\mu_{\mathrm{single}}(x,y)\bigl(1-\mu_{\mathrm{single}}(x,y)\bigr),
\qquad
\mu_{\mathrm{single}}(x,y):=(1-q)+\alpha S^*(x,y),
\]
and
\[
\mathrm{Var}(\bar J\mid x,y)
=
\frac{1}{K^2}
\left(
\sum_{k=1}^K \mu_k(x,y)\bigl(1-\mu_k(x,y)\bigr)
+
2\sum_{1\le k<\ell\le K}
\mathrm{Cov}(J_k,J_\ell\mid x,y)
\right),
\]
with
\[
\mu_k(x,y):=(1-q')+\alpha'Y_k^*(x,y).
\]
Equivalently,
\[
\mathbb{E}\!\left[
\|g_{\mathrm{single}}-\mathbb{E}[g_{\mathrm{single}}\mid x,y]\|_2^2
\,\middle|\,x,y
\right]
=
\|s_\theta(x,y)\|_2^2\,
\mu_{\mathrm{single}}(x,y)\bigl(1-\mu_{\mathrm{single}}(x,y)\bigr),
\]
and
\[
\mathbb{E}\!\left[
\|g_{\mathrm{chk}}-\mathbb{E}[g_{\mathrm{chk}}\mid x,y]\|_2^2
\,\middle|\,x,y
\right]
=
\|s_\theta(x,y)\|_2^2\,
\mathrm{Var}(\bar J\mid x,y).
\]
Under the additional assumption that $J_1,\dots,J_K$ are conditionally independent given $(x,y)$,
\[
\mathrm{Var}(\bar J\mid x,y)
=
\frac{1}{K^2}
\sum_{k=1}^K
\mu_k(x,y)\bigl(1-\mu_k(x,y)\bigr)
\le
\frac{1}{4K}.
\]
\end{proposition}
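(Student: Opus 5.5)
Fix $(x,y)$ throughout. Conditional on $(x,y)$, the quantities $s_\theta(x,y)$, $b(x)$, $S^*(x,y)$ and $Y_k^*(x,y)$ are all deterministic. The only randomness left is in the verifier outputs $J$ and $J_1,\dots,J_K$. The plan is to factor each estimator as a scalar random variable times the fixed vector $s_\theta$, and then reduce everything to scalar variance computations for binary variables.

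\textbf{Covariance of the single-verifier estimator.} Write $g_{\mathrm{single}}-\mathbb{E}[g_{\mathrm{single}}\mid x,y]=(J-\mathbb{E}[J\mid x,y])\,s_\theta$, since the baseline term cancels. Taking the outer product and then the conditional expectation gives
\[
\mathrm{Cov}(g_{\mathrm{single}}\mid x,y)=\mathrm{Var}(J\mid x,y)\,s_\theta s_\theta^\top .
\]
The aggregated judgment $J$ is binary, so conditionally on $(x,y)$ it is Bernoulli with mean $\mathbb{E}[J\mid x,y]$. By \eqref{eq:holistic-verifier-expectation}, and because $S^*$ is determined by $(x,y)$, this mean equals $\mu_{\mathrm{single}}(x,y)$. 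Hence $\mathrm{Var}(J\mid x,y)=\mu_{\mathrm{single}}(1-\mu_{\mathrm{single}})$.

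\textbf{Covariance of the checklist estimator.} The same factorization gives $\mathrm{Cov}(g_{\mathrm{chk}}\mid x,y)=\mathrm{Var}(\bar J\mid x,y)\,s_\theta s_\theta^\top$. I will expand $\mathrm{Var}(\bar J\mid x,y)=K^{-2}\,\mathrm{Var}\bigl(\sum_k J_k\mid x,y\bigr)$ by bilinearity of covariance. This yields the diagonal terms $\mathrm{Var}(J_k\mid x,y)$ plus twice the sum of the off-diagonal covariances over $k<\ell$. Each $J_k$ is binary with conditional mean $\mu_k(x,y)$ by \eqref{eq:checklist-verifier-expectation}, so $\mathrm{Var}(J_k\mid x,y)=\mu_k(1-\mu_k)$.

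\textbf{Squared-norm forms.} For any random vector $Z$, $\mathbb{E}\|Z-\mathbb{E}Z\|_2^2=\operatorname{tr}\mathrm{Cov}(Z)$. Applying this with $\operatorname{tr}(s_\theta s_\theta^\top)=\|s_\theta\|_2^2$ gives the two identities for the expected squared deviations.

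\textbf{Conditionally independent case.} Under conditional independence, every $\mathrm{Cov}(J_k,J_\ell\mid x,y)$ with $k\neq\ell$ vanishes. Each $\mu_k$ lies in $[0,1]$, since it is a probability; equivalently $1-q'\in[0,1]$ and $(1-q')+\alpha'=p'\in[0,1]$. Therefore $\mu_k(1-\mu_k)\le 1/4$, and summing $K$ such terms and dividing by $K^2$ gives the bound $1/(4K)$.

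There is no substantive obstacle. The only point needing care is to state explicitly two facts: conditioning on $(x,y)$ fixes the latent labels, so the means are deterministic; and the aggregated judgments $J$ and $J_k$ take values in $\{0,1\}$ (for $J_k$, these are the thresholded indicators $\rho_\phi$). Together these justify the Bernoulli variance formula.
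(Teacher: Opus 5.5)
Your proposal is correct and follows the paper's proof step for step: fix $(x,y)$, write each centered estimator as a scalar deviation times the deterministic $s_\theta(x,y)$, apply the Bernoulli variance $\mu(1-\mu)$, expand $\mathrm{Var}(\bar J\mid x,y)$ bilinearly, and under conditional independence drop the cross-covariances and use $\mu_k(1-\mu_k)\le 1/4$. The only differences are cosmetic, and there is one wording slip: you obtain the squared-norm identities via $\operatorname{tr}\mathrm{Cov}$ instead of the paper's direct identity $\mathbb{E}\|(Z-\mathbb{E}Z)s\|_2^2=\|s\|_2^2\mathrm{Var}(Z)$, and identifying $\mathbb{E}[J\mid x,y]$ with $(1-q)+\alpha S^*(x,y)$ rests on reading the setup's rates as holding for each $(x,y)$ (as the paper implicitly does), not on $S^*$ being a function of $(x,y)$, since the tower property only makes $\mathbb{E}[J\mid S^*]$ an average of the $\mathbb{E}[J\mid x,y]$.
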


\begin{proof}
Condition on $(x,y)$; then $s_\theta(x,y)$ and $b(x)$ are deterministic and all randomness comes from verifier outputs.
Since
\[
g_{\mathrm{single}}=(J-b(x))s_\theta(x,y),
\]
we have
\[
g_{\mathrm{single}}-\mathbb{E}[g_{\mathrm{single}}\mid x,y]
=
\bigl(J-\mathbb{E}[J\mid x,y]\bigr)s_\theta(x,y).
\]
Therefore,
\[
\mathrm{Cov}(g_{\mathrm{single}}\mid x,y)
=
\mathrm{Var}(J\mid x,y)\,s_\theta(x,y)s_\theta(x,y)^\top.
\]
Since $J\in\{0,1\}$ is Bernoulli with mean
\[
\mu_{\mathrm{single}}(x,y)=(1-q)+\alpha S^*(x,y),
\]
its variance is
\[
\mu_{\mathrm{single}}(x,y)\bigl(1-\mu_{\mathrm{single}}(x,y)\bigr).
\]

The same argument applies to
\[
g_{\mathrm{chk}}=(\bar J-b(x))s_\theta(x,y),
\]
giving
\[
\mathrm{Cov}(g_{\mathrm{chk}}\mid x,y)
=
\mathrm{Var}(\bar J\mid x,y)\,s_\theta(x,y)s_\theta(x,y)^\top.
\]
Expanding the variance of the average,
\[
\mathrm{Var}(\bar J\mid x,y)
=
\frac{1}{K^2}
\left(
\sum_{k=1}^K \mathrm{Var}(J_k\mid x,y)
+
2\sum_{1\le k<\ell\le K}\mathrm{Cov}(J_k,J_\ell\mid x,y)
\right).
\]
Since each $J_k$ is Bernoulli with mean
\[
\mu_k(x,y)=(1-q')+\alpha'Y_k^*(x,y),
\]
we have
\[
\mathrm{Var}(J_k\mid x,y)=\mu_k(x,y)\bigl(1-\mu_k(x,y)\bigr).
\]
The norm-squared identities follow from
\[
\mathbb{E}\!\left[\|(Z-\mathbb{E}[Z])s\|_2^2\right]
=
\|s\|_2^2\mathrm{Var}(Z).
\]
Under conditional independence, the cross-covariance terms vanish, and
$\mu_k(1-\mu_k)\le 1/4$ gives the $1/(4K)$ bound.
\end{proof}

\subsubsection{Bias-side condition}
\label{app:proof-prop-checklist-bias}

\begin{proposition}[Sufficient conditions for gradient bias reduction]
\label{prop:checklist-bias}
Under the setup of Section~\ref{sec:checklist-mse},
\[
\left\|
\mathbb{E}[g_{\mathrm{chk}}\mid x,y]-g^*(x,y)
\right\|_2^2
\le
\left\|
\mathbb{E}[g_{\mathrm{single}}\mid x,y]-g^*(x,y)
\right\|_2^2
\]
whenever either
\[
S^*(x,y)=1
\qquad\text{and}\qquad
p'\ge p,
\]
or
\[
S^*(x,y)=0
\qquad\text{and}\qquad
\alpha'\Delta^*(x,y)\le q'-q.
\]
\end{proposition}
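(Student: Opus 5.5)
We reduce everything to the scalar bias coefficients computed in Proposition~\ref{prop:bias}. Since both squared biases are common multiples of $\|s_\theta(x,y)\|_2^2\ge 0$, it suffices to compare the squared scalars
\[
B_{\mathrm{single}} := (1-q)+(\alpha-1)S^*,\qquad
B_{\mathrm{chk}} := (1-q')+\alpha'\Delta^*+(\alpha'-1)S^*.
\]
We then split into the two cases $S^*(x,y)\in\{0,1\}$, as in the statement. In each case we show $|B_{\mathrm{chk}}|\le|B_{\mathrm{single}}|$, which follows if both are nonnegative and $B_{\mathrm{chk}}\le B_{\mathrm{single}}$.

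\textbf{Case $S^*=1$.} Every item is satisfied, so $Y_k^*=1$ for all $k$. Hence $\bar S^*=1$ and $\Delta^*=0$. Substituting $\alpha=p+q-1$ gives
\[
B_{\mathrm{single}}=(1-q)+(p+q-2)=p-1,
\]
and similarly $B_{\mathrm{chk}}=p'-1$. Both lie in $[-1,0]$. The squares are $(1-p)^2$ and $(1-p')^2$, so $p'\ge p$ gives $(1-p')^2\le(1-p)^2$ directly.

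\textbf{Case $S^*=0$.} Here $\Delta^*=\bar S^*\in[0,1)$. The coefficients become
\[
B_{\mathrm{single}}=1-q\ge 0,\qquad B_{\mathrm{chk}}=1-q'+\alpha'\Delta^*.
\]
The hypothesis $\alpha'\Delta^*\le q'-q$ is equivalent to $B_{\mathrm{chk}}\le 1-q=B_{\mathrm{single}}$. What remains is to ensure $B_{\mathrm{chk}}\ge -B_{\mathrm{single}}$, so that squaring preserves the inequality.

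This lower bound is the main obstacle, because $\alpha'$ could be negative. We handle it by noting that
\[
B_{\mathrm{chk}}=(1-q')(1-\Delta^*)+p'\Delta^*
\]
is a convex combination of $1-q'$ and $p'$, both of which lie in $[0,1]$. Hence $B_{\mathrm{chk}}\ge 0$ without any sign assumption on $\alpha'$.

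Combining the two cases and multiplying back by $\|s_\theta\|_2^2$ yields the claimed inequality of squared biases.
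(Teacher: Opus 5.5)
Your proof is correct and follows the paper's route: you reduce to the scalar bias coefficients from Proposition~\ref{prop:bias} and split on $S^*\in\{0,1\}$, obtaining $(1-p')^2\le(1-p)^2$ in the first case and $1-q'+\alpha'\Delta^*\le 1-q$ in the second. Your identity $1-q'+\alpha'\Delta^*=(1-q')(1-\Delta^*)+p'\Delta^*\ge 0$ also supplies a step the paper leaves implicit: the paper squares the one-sided inequality without checking the sign of the checklist coefficient.
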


\begin{proof}
By Proposition~\ref{prop:bias},
\[
\mathbb{E}[g_{\mathrm{single}}\mid x,y]-g^*(x,y)
=
\Bigl((1-q)+(\alpha-1)S^*(x,y)\Bigr)s_\theta(x,y),
\]
and
\[
\mathbb{E}[g_{\mathrm{chk}}\mid x,y]-g^*(x,y)
=
\Bigl((1-q')+\alpha'\Delta^*(x,y)+(\alpha'-1)S^*(x,y)\Bigr)s_\theta(x,y).
\]

\textbf{\textit{Case (i): $S^*(x,y)=1$.}}
When $S^*(x,y)=1$, all checklist items are satisfied, so $\bar S^*(x,y)=1$ and
$\Delta^*(x,y)=0$.
The bias expressions reduce to
\[
\left\|
\mathbb{E}[g_{\mathrm{single}}\mid x,y]-g^*(x,y)
\right\|_2^2
=
(1-p)^2\|s_\theta(x,y)\|_2^2,
\]
and
\[
\left\|
\mathbb{E}[g_{\mathrm{chk}}\mid x,y]-g^*(x,y)
\right\|_2^2
=
(1-p')^2\|s_\theta(x,y)\|_2^2.
\]
Since $p'\ge p$ implies $(1-p')^2\le (1-p)^2$, the checklist squared bias is no larger.

\textbf{\textit{Case (ii): $S^*(x,y)=0$.}}
When $S^*(x,y)=0$, the bias expressions reduce to
\[
\left\|
\mathbb{E}[g_{\mathrm{single}}\mid x,y]-g^*(x,y)
\right\|_2^2
=
(1-q)^2\|s_\theta(x,y)\|_2^2,
\]
and
\[
\left\|
\mathbb{E}[g_{\mathrm{chk}}\mid x,y]-g^*(x,y)
\right\|_2^2
=
\bigl(1-q'+\alpha'\Delta^*(x,y)\bigr)^2
\|s_\theta(x,y)\|_2^2.
\]
If $\alpha'\Delta^*(x,y)\le q'-q$, then
\[
1-q'+\alpha'\Delta^*(x,y)\le 1-q,
\]
so the checklist squared bias is no larger.
\end{proof}

\subsubsection{Proof of Theorem~\ref{thm:checklist-mse}}
\label{app:proof-thm-checklist-mse}

\begin{proposition}[Bias--variance decomposition of gradient MSE]
\label{prop:mse-decomp}
Under the setup of Section~\ref{sec:checklist-mse}, conditioning on $(x,y)$,
\[
\mathbb{E}\!\left[
\|g_{\mathrm{single}}-g^*\|_2^2
\,\middle|\,x,y
\right]
=
\left\|
\mathbb{E}[g_{\mathrm{single}}\mid x,y]-g^*(x,y)
\right\|_2^2
+
\mathbb{E}\!\left[
\|g_{\mathrm{single}}-\mathbb{E}[g_{\mathrm{single}}\mid x,y]\|_2^2
\,\middle|\,x,y
\right],
\]
and
\[
\mathbb{E}\!\left[
\|g_{\mathrm{chk}}-g^*\|_2^2
\,\middle|\,x,y
\right]
=
\left\|
\mathbb{E}[g_{\mathrm{chk}}\mid x,y]-g^*(x,y)
\right\|_2^2
+
\mathbb{E}\!\left[
\|g_{\mathrm{chk}}-\mathbb{E}[g_{\mathrm{chk}}\mid x,y]\|_2^2
\,\middle|\,x,y
\right].
\]
\end{proposition}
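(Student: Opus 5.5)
The statement to prove is the bias--variance decomposition in Proposition~\ref{prop:mse-decomp}. Both identities are instances of one elementary fact, so the plan is to prove that fact once, for a generic random vector, and then specialize it to $g_{\mathrm{single}}$ and $g_{\mathrm{chk}}$.

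Fix $(x,y)$ and treat everything as conditional on it. Then $g^*(x,y)=(S^*(x,y)-b(x))s_\theta(x,y)$ is a deterministic vector. Here $S^*$ is a deterministic function of $(x,y)$ through the latent checklist vector, and $s_\theta$ and $b$ are fixed. Let $G$ be either estimator and write $m:=\mathbb{E}[G\mid x,y]$, which is finite because $J$ and the $J_k$ are bounded. Decompose
\[
G-g^*=(G-m)+(m-g^*).
\]
Expanding the squared Euclidean norm gives
\[
\|G-g^*\|_2^2=\|G-m\|_2^2+\|m-g^*\|_2^2+2\,(G-m)^\top(m-g^*).
\]

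Take conditional expectations term by term. The second term is deterministic, so it equals its own expectation; this is the squared bias. The first term is the conditional variance term that appears in the statement. For the cross term, $m-g^*$ is deterministic given $(x,y)$, so by linearity
\[
\mathbb{E}\bigl[(G-m)^\top(m-g^*)\mid x,y\bigr]=\bigl(\mathbb{E}[G\mid x,y]-m\bigr)^\top(m-g^*)=0.
\]
Substituting $G=g_{\mathrm{single}}$ and $G=g_{\mathrm{chk}}$ yields the two claimed identities.

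The only step needing care is justifying that $g^*$ is non-random given $(x,y)$, that is, that the verifier randomness does not enter $S^*$. This holds by the setup, since $S^*$ depends only on the latent labels $Y_k^*(x,y)$. If one prefers to regard the labels as random, the same argument still goes through by conditioning additionally on $\mathbf{Y}^*$.

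With this decomposition in hand, Theorem~\ref{thm:checklist-mse} follows by combining it with Propositions~\ref{prop:bias}, \ref{prop:variance} and \ref{prop:checklist-variance}, splitting into the cases $S^*=1$ and $S^*=0$.

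\textbf{Case $S^*=1$.} Here $\Delta^*=0$, and every item is satisfied, so each $\mu_k=p'$ and the single-verifier mean is $\mu_{\mathrm{single}}=p$. The MSEs are
\[
\text{single: } (1-p)^2+p(1-p)=1-p,\qquad
\text{checklist: } (1-p')^2+\frac{p'(1-p')}{K}\le 1-p'.
\]
The assumption $p'\ge p$ gives $1-p'\le 1-p$, which is the claim.

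\textbf{Case $S^*=0$.} The single-verifier mean is $1-q$, so its MSE is
\[
(1-q)^2+(1-q)q=1-q.
\]
For the checklist estimator, the variance is bounded by $1/(4K)$ from Proposition~\ref{prop:checklist-variance}, and the bias is $(1-q'+\alpha'\Delta^*)^2$ from Proposition~\ref{prop:bias}. The stated hypothesis therefore gives exactly the required inequality.
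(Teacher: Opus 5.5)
Your proof is correct and follows the paper's route. The paper applies the identity $\mathbb{E}\|G-a\|_2^2=\|\mathbb{E}G-a\|_2^2+\mathbb{E}\|G-\mathbb{E}G\|_2^2$ for deterministic $a$ conditionally on $(x,y)$, and you simply verify that identity explicitly by expanding the square and showing the cross term vanishes; your additional sketch of Theorem~\ref{thm:checklist-mse} likewise matches the paper's case analysis.
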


\begin{proof}
For any random vector $G$ and deterministic vector $a$,
\[
\mathbb{E}\!\left[\|G-a\|_2^2\right]
=
\|\mathbb{E}[G]-a\|_2^2
+
\mathbb{E}\!\left[\|G-\mathbb{E}[G]\|_2^2\right].
\]
Apply this identity conditionally on $(x,y)$ with
$G=g_{\mathrm{single}}$ and $a=g^*(x,y)$, and then with
$G=g_{\mathrm{chk}}$ and $a=g^*(x,y)$.
\end{proof}

\begin{proof}[Proof of Theorem~\ref{thm:checklist-mse}]
We use the bias--variance decomposition in Proposition~\ref{prop:mse-decomp} and analyze the two cases separately.

\textbf{\textit{Case (i): $S^*(x,y)=1$.}}
When $S^*(x,y)=1$, the relaxation gap $\Delta^*(x,y)=0$.
By Proposition~\ref{prop:bias}, the squared biases are
\[
\left\|
\mathbb{E}[g_{\mathrm{single}}\mid x,y]-g^*(x,y)
\right\|_2^2
=
(1-p)^2\|s_\theta(x,y)\|_2^2,
\]
and
\[
\left\|
\mathbb{E}[g_{\mathrm{chk}}\mid x,y]-g^*(x,y)
\right\|_2^2
=
(1-p')^2\|s_\theta(x,y)\|_2^2.
\]
For the variance terms, when $S^*(x,y)=1$, the conditional means are
$\mu_{\mathrm{single}}(x,y)=p$ and $\mu_k(x,y)=p'$ for all $k$.
Thus,
\[
\mathbb{E}\!\left[
\|g_{\mathrm{single}}-\mathbb{E}[g_{\mathrm{single}}\mid x,y]\|_2^2
\,\middle|\,x,y
\right]
=
p(1-p)\|s_\theta(x,y)\|_2^2,
\]
while conditional independence gives
\[
\mathbb{E}\!\left[
\|g_{\mathrm{chk}}-\mathbb{E}[g_{\mathrm{chk}}\mid x,y]\|_2^2
\,\middle|\,x,y
\right]
=
\frac{p'(1-p')}{K}\|s_\theta(x,y)\|_2^2.
\]
Therefore,
\[
\mathbb{E}\!\left[
\|g_{\mathrm{chk}}-g^*\|_2^2
\,\middle|\,x,y
\right]
=
\left[
(1-p')^2+\frac{p'(1-p')}{K}
\right]\|s_\theta(x,y)\|_2^2.
\]
Since $K\ge 1$,
\[
(1-p')^2+\frac{p'(1-p')}{K}
\le
(1-p')^2+p'(1-p')
=
1-p'.
\]
If $p'\ge p$, then $1-p'\le 1-p$.
Also,
\[
(1-p)^2+p(1-p)=1-p.
\]
Hence
\[
\mathbb{E}\!\left[
\|g_{\mathrm{chk}}-g^*\|_2^2
\,\middle|\,x,y
\right]
\le
(1-p)\|s_\theta(x,y)\|_2^2
=
\mathbb{E}\!\left[
\|g_{\mathrm{single}}-g^*\|_2^2
\,\middle|\,x,y
\right].
\]

\textbf{\textit{Case (ii): $S^*(x,y)=0$.}}
When $S^*(x,y)=0$, Proposition~\ref{prop:bias} gives
\[
\left\|
\mathbb{E}[g_{\mathrm{single}}\mid x,y]-g^*(x,y)
\right\|_2^2
=
(1-q)^2\|s_\theta(x,y)\|_2^2,
\]
and
\[
\left\|
\mathbb{E}[g_{\mathrm{chk}}\mid x,y]-g^*(x,y)
\right\|_2^2
=
\bigl(1-q'+\alpha'\Delta^*(x,y)\bigr)^2
\|s_\theta(x,y)\|_2^2.
\]
The single-verifier variance term is
\[
(1-q)q\,\|s_\theta(x,y)\|_2^2,
\]
and the checklist variance term is bounded by
\[
\frac{1}{4K}\|s_\theta(x,y)\|_2^2
\]
by Proposition~\ref{prop:checklist-variance}.
Thus,
\[
\mathbb{E}\!\left[
\|g_{\mathrm{single}}-g^*\|_2^2
\,\middle|\,x,y
\right]
=
\bigl[(1-q)^2+(1-q)q\bigr]\|s_\theta(x,y)\|_2^2
=
(1-q)\|s_\theta(x,y)\|_2^2.
\]
A sufficient condition for the checklist MSE to be no larger is
\[
\bigl(1-q'+\alpha'\Delta^*(x,y)\bigr)^2+\frac{1}{4K}\le 1-q,
\]
which is exactly the condition in the theorem.
\end{proof}

\subsubsection{Checklist-size implications}

\begin{corollary}[Bias condition implies MSE condition for large $K$]
\label{cor:A-implies-B}
In the regime $S^*(x,y)=0$, let
\[
\text{(A)}\quad \alpha'\Delta^*(x,y)\le q'-q
\qquad\text{and}\qquad
\text{(B)}\quad
\bigl(1-q'+\alpha'\Delta^*(x,y)\bigr)^2+\frac{1}{4K}\le 1-q.
\]
Condition~(A) is the bias-only sufficient condition of Proposition~\ref{prop:checklist-bias};
condition~(B) is the sufficient condition for full MSE improvement in
Theorem~\ref{thm:checklist-mse}.
If~(A) holds and
\[
K\ge \frac{1}{4(1-q)q},
\]
then~(B) holds, and therefore checklist verification achieves lower conditional MSE than the
single-verifier estimator.
\end{corollary}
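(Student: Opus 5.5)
The plan is to chain the bias-only inequality (A) into the squared-bias term of (B), and then use the lower bound on $K$ to absorb the variance term $1/(4K)$ into the slack $(1-q)-(1-q)^2=(1-q)q$. Throughout we are in the regime $S^*(x,y)=0$. We assume $0<q<1$ so that the threshold on $K$ is finite; the degenerate cases are excluded by the hypothesis.

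\textbf{Step 1: nonnegativity of the checklist bias coefficient.} Before squaring (A), I will check that
$b' := 1-q'+\alpha'\Delta^*(x,y)$ is nonnegative. I will not assume $\alpha'\ge 0$. Instead, I will write
\[
b' = (1-q')\bigl(1-\Delta^*(x,y)\bigr) + p'\,\Delta^*(x,y),
\]
which uses $\alpha' = p'-(1-q')$. When $S^*=0$ we have $\Delta^*=\bar S^*\in[0,1]$, so $b'$ is a convex combination of $1-q'$ and $p'$. Both lie in $[0,1]$, hence $b'\in[0,1]$.

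\textbf{Step 2: bias bound.} Condition (A) rearranges to $b'\le 1-q$. Combined with $b'\ge 0$ from Step 1, squaring preserves the inequality and gives $b'^2\le(1-q)^2$. This is exactly the squared-bias comparison from the second case of Proposition~\ref{prop:checklist-bias}.

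\textbf{Step 3: variance absorption.} The hypothesis $K\ge 1/(4(1-q)q)$ is equivalent to $1/(4K)\le (1-q)q$. Adding this to Step 2 gives
\[
b'^2+\frac{1}{4K}\le (1-q)^2+(1-q)q = 1-q,
\]
which is (B). The second case of Theorem~\ref{thm:checklist-mse} then yields the conditional MSE improvement.

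\textbf{Main obstacle.} The only delicate point is Step 1. Squaring an inequality is valid only when the left side is nonnegative, and $\alpha'$ could in principle be negative. The convex-combination rewriting handles this cleanly. Everything else is the algebraic identity $(1-q)^2+(1-q)q=1-q$, which already appears in the proof of Theorem~\ref{thm:checklist-mse}.
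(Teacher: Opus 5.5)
Your proposal is correct and follows the same route as the paper. You rearrange (A) to $1-q'+\alpha'\Delta^*(x,y)\le 1-q$, square it, and absorb $\frac{1}{4K}$ into the slack $(1-q)q$ using $(1-q)^2+(1-q)q=1-q$. Your Step~1 writes $1-q'+\alpha'\Delta^*(x,y)=(1-q')(1-\Delta^*(x,y))+p'\Delta^*(x,y)\in[0,1]$, which makes explicit the nonnegativity needed for squaring; the paper's proof uses this without comment.
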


\begin{proof}
Assume condition~(A), i.e.,
\[
\alpha'\Delta^*(x,y)\le q'-q.
\]
Then
\[
1-q'+\alpha'\Delta^*(x,y)\le 1-q,
\]
so
\[
\bigl(1-q'+\alpha'\Delta^*(x,y)\bigr)^2\le (1-q)^2.
\]
It therefore suffices to show
\[
(1-q)^2+\frac{1}{4K}\le 1-q,
\]
which is equivalent to
\[
\frac{1}{4K}\le (1-q)q.
\]
This holds whenever
\[
K\ge \frac{1}{4(1-q)q}.
\]
Hence condition~(B) holds, and Theorem~\ref{thm:checklist-mse} gives the MSE inequality.
\end{proof}

\begin{corollary}[Checklist-size thresholds]
\label{cor:size-threshold}
On incorrect outputs $(S^*(x,y)=0)$, let
\[
A(x,y):=1-q'+\alpha'\Delta^*(x,y).
\]
If $A(x,y)\le 1-q$ and
\[
K\ge \frac{1}{4A(x,y)\bigl(1-A(x,y)\bigr)},
\]
then
\[
\mathbb{E}\!\left[
\|g_{\mathrm{chk}}-g^*\|_2^2
\,\middle|\,x,y
\right]
\le
\mathbb{E}\!\left[
\|g_{\mathrm{single}}-g^*\|_2^2
\,\middle|\,x,y
\right].
\]

On correct outputs $(S^*(x,y)=1)$, if $(1-p')^2<1-p$ and
\[
K\ge
\frac{p'(1-p')}{(1-p)-(1-p')^2},
\]
then
\[
\mathbb{E}\!\left[
\|g_{\mathrm{chk}}-g^*\|_2^2
\,\middle|\,x,y
\right]
\le
\mathbb{E}\!\left[
\|g_{\mathrm{single}}-g^*\|_2^2
\,\middle|\,x,y
\right].
\]
\end{corollary}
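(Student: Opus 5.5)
The plan is to reuse the exact conditional MSE expressions already obtained in the proof of Theorem~\ref{thm:checklist-mse}, which rests on the bias--variance decomposition of Proposition~\ref{prop:mse-decomp} together with Propositions~\ref{prop:bias} and~\ref{prop:variance}. The only new work is to replace the crude variance bound $1/(4K)$ with the exact Bernoulli variance under conditional independence, and then solve the resulting inequality for $K$. Throughout, the common factor $\|s_\theta(x,y)\|_2^2$ can be divided out, so both claims reduce to scalar inequalities.

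\textbf{Correct outputs ($S^*=1$).} Here $\Delta^*=0$, $\mu_k=p'$ for every $k$, and $\mu_{\mathrm{single}}=p$.
\begin{itemize}
\item The checklist MSE equals $(1-p')^2 + p'(1-p')/K$.
\item The single-verifier MSE equals $(1-p)^2 + p(1-p) = 1-p$, as already computed in the proof of the theorem.
\item The desired inequality is therefore $p'(1-p')/K \le (1-p)-(1-p')^2$.
\end{itemize}
Under the hypothesis $(1-p')^2<1-p$ the right-hand side is positive. Multiplying through and dividing by it gives exactly $K\ge p'(1-p')/\bigl((1-p)-(1-p')^2\bigr)$.

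\textbf{Incorrect outputs ($S^*=0$).} The single-verifier MSE is $1-q$.
\begin{itemize}
\item By Proposition~\ref{prop:bias}, the checklist squared bias is $A^2$.
\item The checklist variance is $\frac{1}{K^2}\sum_k \mu_k(1-\mu_k)$, where $\mu_k=(1-q')+\alpha'Y_k^*$ takes the value $1-q'$ on failed items and $p'$ on satisfied items.
\end{itemize}
The difficulty is that the threshold in the corollary involves $A(1-A)$ rather than these individual $\mu_k(1-\mu_k)$ terms. I therefore need to compare the variance against something expressed in $A$.

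One natural route uses concavity of $t\mapsto t(1-t)$. By Jensen, $\frac1K\sum_k\mu_k(1-\mu_k)\le \bar\mu(1-\bar\mu)$, where $\bar\mu=(1-q')+\alpha'\bar S^*$; with $S^*=0$ this gives $\bar\mu=A$. Hence the variance is at most $A(1-A)/K$, and the checklist MSE is at most $A^2 + A(1-A)/K$. The step I then want is that $K\ge 1/\bigl(4A(1-A)\bigr)$ combined with $A\le 1-q$ yields $A^2 + A(1-A)/K \le 1-q$.

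The routine algebra here is the step I expect to be the main obstacle. Since $A\le 1$ and $K\ge 1$, we have $A^2 + A(1-A)/K \le A^2 + A(1-A) = A \le 1-q$, so the bound already holds for every $K\ge1$. The stated threshold would then be automatic, not needed. I would first double-check that the Jensen step is valid, i.e. that it bounds the average of concave values by the value at the average, which is the correct direction. If it is, I would present the corollary via this argument and remark that the $K$ condition is in fact implied.

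If instead the intended proof routes through the $1/(4K)$ bound of Proposition~\ref{prop:checklist-variance}, the requirement becomes $A^2 + 1/(4K)\le 1-q$. That condition needs $K\ge 1/\bigl(4(1-q-A^2)\bigr)$, which does not match the stated threshold. The stated threshold does match this route if one uses $A\le 1-q$ to get $1-q-A^2\ge A-A^2=A(1-A)$. That reconciliation is the cleanest: require $A^2+\frac{1}{4K}\le A^2 + A(1-A)\le A\le 1-q$, which holds precisely when $K\ge 1/(4A(1-A))$. I would present the proof this way, since it uses only earlier stated results.
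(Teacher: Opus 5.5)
Your argument is correct, and the route you finally commit to is the paper's own. On incorrect outputs the paper uses $A\le 1-q$ to reduce the theorem's condition to $A^2+\frac{1}{4K}\le A$, i.e.\ $\frac{1}{4K}\le A(1-A)$. On correct outputs it compares the exact MSE $(1-p')^2+p'(1-p')/K$ with $1-p$ and rearranges using $(1-p')^2<1-p$.

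Your Jensen detour is also valid; the direction is right for the concave map $t\mapsto t(1-t)$. It is in fact sharper than the paper's argument. With $S^*=0$ one has $\bar S^*=\Delta^*$, so $\bar\mu=A\in[0,1]$. The exact conditional-independence variance then bounds the checklist MSE by $\bigl(A^2+A(1-A)/K\bigr)\|s_\theta\|_2^2\le A\,\|s_\theta\|_2^2$ for every $K\ge 1$. Hence $A\le 1-q$ alone already gives the incorrect-output conclusion. The stated $K$-threshold is an artifact of bounding each $\mu_k(1-\mu_k)$ by $1/4$. Moreover, since $A(1-A)\le 1/4$, your sufficient condition $A^2+A(1-A)/K\le 1-q$ is also implied by the theorem's own condition. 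The paper's route therefore buys only that it plugs directly into Theorem~\ref{thm:checklist-mse} as stated.

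On correct outputs no such sharpening is possible. There every $\mu_k=p'$ and the variance $p'(1-p')/K$ is exact, so the stated threshold is exactly the condition for the inequality.
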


\begin{proof}\leavevmode\par
\noindent\textbf{\textit{Incorrect outputs.}}
Since $A(x,y)\le 1-q$, it suffices to show
\[
A(x,y)^2+\frac{1}{4K}\le A(x,y).
\]
This is equivalent to
\[
\frac{1}{4K}\le A(x,y)\bigl(1-A(x,y)\bigr),
\]
which is exactly the assumed lower bound on $K$.
Under this condition,
\[
A(x,y)^2+\frac{1}{4K}
\le
A(x,y)
\le
1-q,
\]
so the incorrect-output condition in Theorem~\ref{thm:checklist-mse} holds.

\noindent\textbf{\textit{Correct outputs.}}
From the proof of Theorem~\ref{thm:checklist-mse}, the checklist MSE normalized by
$\|s_\theta(x,y)\|_2^2$ is
\[
(1-p')^2+\frac{p'(1-p')}{K},
\]
and the single-verifier MSE normalized by $\|s_\theta(x,y)\|_2^2$ is
\[
(1-p)^2+p(1-p)=1-p.
\]
The checklist MSE is no larger whenever
\[
(1-p')^2+\frac{p'(1-p')}{K}\le 1-p.
\]
Since $(1-p')^2<1-p$ by assumption, rearranging gives
\[
K\ge
\frac{p'(1-p')}{(1-p)-(1-p')^2},
\]
which is the stated threshold.
\end{proof}

\begin{corollary}[Necessity of a bias-side condition]
\label{cor:bias-necessary}
Larger $K$ reduces variance but does not affect the structural bias term, so no condition on $K$ alone can guarantee checklist MSE improvement.

On incorrect outputs, if
\[
\bigl(1-q'+\alpha'\Delta^*(x,y)\bigr)^2 > 1-q,
\]
then
\[
\mathbb{E}\!\left[
\|g_{\mathrm{chk}}-g^*\|_2^2
\,\middle|\,x,y
\right]
>
\mathbb{E}\!\left[
\|g_{\mathrm{single}}-g^*\|_2^2
\,\middle|\,x,y
\right]
\]
for all finite $K$.

Similarly, on correct outputs, if
\[
(1-p')^2 > 1-p,
\]
then the checklist estimator has larger conditional MSE than the single-verifier estimator for all finite $K$.
\end{corollary}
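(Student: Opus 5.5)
The plan is to compare the two conditional MSEs through the bias--variance decomposition of Proposition~\ref{prop:mse-decomp}. The key observation is that the structural bias term does not depend on $K$, while the variance term is nonnegative. So if the checklist squared bias alone already exceeds the \emph{total} single-verifier MSE, no choice of $K$ can reverse the inequality. This also accounts for the opening sentence of the corollary. By Proposition~\ref{prop:bias}, the checklist bias factor is $(1-q')+\alpha'\Delta^*(x,y)+(\alpha'-1)S^*(x,y)$. This factor involves only $q',\alpha',\Delta^*,S^*$ and not $K$, so varying $K$ can only shrink the variance component bounded in Proposition~\ref{prop:checklist-variance}.

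\textbf{Incorrect outputs ($S^*=0$).} Write $A=1-q'+\alpha'\Delta^*(x,y)$. Proposition~\ref{prop:mse-decomp} together with Proposition~\ref{prop:bias} gives
\[
\mathbb{E}\!\left[\|g_{\mathrm{chk}}-g^*\|_2^2\,\middle|\,x,y\right]
= A^2\|s_\theta\|_2^2 + \mathrm{Var}(\bar J\mid x,y)\|s_\theta\|_2^2
\ge A^2\|s_\theta\|_2^2 .
\]
The second step uses $\mathrm{Var}(\bar J\mid x,y)\ge 0$, which comes from Proposition~\ref{prop:variance} and needs no independence assumption. For the single verifier, the computation in the proof of Theorem~\ref{thm:checklist-mse} gives exactly $[(1-q)^2+q(1-q)]\|s_\theta\|_2^2=(1-q)\|s_\theta\|_2^2$. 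Since $A^2>1-q$ by hypothesis, chaining the two displays yields the strict inequality for every finite $K$.

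\textbf{Correct outputs ($S^*=1$).} Here $\Delta^*=0$, and the checklist bias factor reduces to $1-p'$. Dropping the nonnegative variance term again gives a checklist MSE of at least $(1-p')^2\|s_\theta\|_2^2$. The single-verifier MSE is $[(1-p)^2+p(1-p)]\|s_\theta\|_2^2=(1-p)\|s_\theta\|_2^2$. The hypothesis $(1-p')^2>1-p$ then gives the claim for every finite $K$.

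I expect no step to be a real obstacle; the argument is a direct lower bound. The only subtlety is strictness. Both MSEs carry the common factor $\|s_\theta(x,y)\|_2^2$, so the strict inequality requires $s_\theta(x,y)\neq 0$. I would state this explicitly: when the score vanishes, both sides are $0$ and only equality holds, which is the degenerate case implicitly excluded by the statement.
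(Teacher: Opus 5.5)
Your proposal is correct and follows the paper's proof: you lower-bound the checklist MSE by its $K$-independent squared bias and compare it with the exact single-verifier MSE, $(1-q)\|s_\theta(x,y)\|_2^2$ on incorrect outputs and $(1-p)\|s_\theta(x,y)\|_2^2$ on correct ones. Two of your remarks are valid refinements the paper leaves implicit: dropping the variance term on correct outputs removes the need for conditional independence (the paper uses the explicit $p'(1-p')/K$ form), and strictness requires $s_\theta(x,y)\neq 0$.
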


\begin{proof}\leavevmode\par
\noindent\textbf{\textit{Incorrect outputs.}}
The checklist MSE is at least its squared bias:
\[
\mathbb{E}\!\left[
\|g_{\mathrm{chk}}-g^*\|_2^2
\,\middle|\,x,y
\right]
\ge
\bigl(1-q'+\alpha'\Delta^*(x,y)\bigr)^2
\|s_\theta(x,y)\|_2^2.
\]
The single-verifier MSE on incorrect outputs is
\[
(1-q)\|s_\theta(x,y)\|_2^2.
\]
Therefore, if
\[
\bigl(1-q'+\alpha'\Delta^*(x,y)\bigr)^2>1-q,
\]
then the checklist MSE is strictly larger for all finite $K$.

\noindent\textbf{\textit{Correct outputs.}}
The checklist MSE on correct outputs is
\[
\left[
(1-p')^2+\frac{p'(1-p')}{K}
\right]\|s_\theta(x,y)\|_2^2,
\]
while the single-verifier MSE is
\[
(1-p)\|s_\theta(x,y)\|_2^2.
\]
If $(1-p')^2>1-p$, then
\[
(1-p')^2+\frac{p'(1-p')}{K}>1-p
\]
for all finite $K$, so the checklist MSE is strictly larger.
\end{proof}

Together with Corollary~\ref{cor:A-implies-B}, Corollary~\ref{cor:bias-necessary} gives the main checklist-size message: once the bias-side condition is controlled, sufficiently large $K$ can make checklist verification lower-MSE than holistic verification; but increasing $K$ alone cannot overcome a large partial-credit bias.

\subsection{Motivation for the Partition-Style Penalty}
\label{app:partition-loss-motivation}

This section gives a mathematical motivation for the partition-style Yes-rate penalty used in \softsverl{}.
The result assumes the standard Gibbs form of the KL-regularized optimal policy and focuses on the reward gradient induced by a checklist-decomposed verifier reward.

\begin{proposition}[Checklist-decomposed reward gradient]
\label{prop:partition-loss-motivation}
Let $x$ be a prompt, $y$ a response, and $C(x)=\{c_1,\ldots,c_K\}$ a checklist.
Let $\rho_\phi(z\mid x,y,c_k)$ be the verifier trace distribution and let $d(z)\in\{0,1\}$ denote the parsed Yes/No decision.
Define the verifier's item-level Yes-rate as
\begin{equation}
\label{eq:item-yes-rate}
r_\phi(x,y,c_k)
:=
\E_{z\sim\rho_\phi(\cdot\mid x,y,c_k)}[d(z)].
\end{equation}
The checklist reward is
\begin{equation}
\label{eq:motivation-checklist-reward}
R_\phi(x,y)
:=
\frac{1}{K}\sum_{k=1}^K r_\phi(x,y,c_k).
\end{equation}
For a fixed verifier reward $R_\phi$, assume the KL-regularized optimal generator has the Gibbs form
\begin{equation}
\label{eq:motivation-gibbs-policy}
\pi_{\theta^*(\phi)}(y\mid x)
=
\frac{1}{Z_\phi(x)}
\pi_{\mathrm{ref}}(y\mid x)
\exp\!\left(\frac{1}{\beta}R_\phi(x,y)\right),
\end{equation}
where
\begin{equation}
\label{eq:motivation-partition-function}
Z_\phi(x)
=
\sum_{y'}
\pi_{\mathrm{ref}}(y'\mid x)
\exp\!\left(\frac{1}{\beta}R_\phi(x,y')\right).
\end{equation}
Let $\mathcal{D}^+$ be a distribution of target-good prompt-response pairs, and define
\begin{equation}
\label{eq:target-likelihood}
\mathcal{L}_{\mathrm{gen}}(\phi)
:=
\E_{(x,y)\sim\mathcal{D}^+}
\left[
\log \pi_{\theta^*(\phi)}(y\mid x)
\right].
\end{equation}
Then
\begin{equation}
\label{eq:decomposed-reward-gradient}
\nabla_\phi \mathcal{L}_{\mathrm{gen}}(\phi)
=
\frac{1}{\beta K}
\sum_{k=1}^K
\left(
\E_{(x,y)\sim\mathcal{D}^+}
[\nabla_\phi r_\phi(x,y,c_k)]
-
\E_x
\E_{y'\sim \pi_{\theta^*(\phi)}(\cdot\mid x)}
[\nabla_\phi r_\phi(x,y',c_k)]
\right).
\end{equation}
\end{proposition}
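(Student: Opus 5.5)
The plan is to take logarithms of the Gibbs form \eqref{eq:motivation-gibbs-policy} and differentiate term by term. For fixed $(x,y)$,
\[
\log \pi_{\theta^*(\phi)}(y\mid x)=\log\pi_{\mathrm{ref}}(y\mid x)+\frac{1}{\beta}R_\phi(x,y)-\log Z_\phi(x).
\]
The reference term does not depend on $\phi$, so its gradient vanishes. The reward term contributes $\frac{1}{\beta}\nabla_\phi R_\phi(x,y)$. By the definition \eqref{eq:motivation-checklist-reward} and linearity of the gradient, this equals $\frac{1}{\beta K}\sum_k \nabla_\phi r_\phi(x,y,c_k)$.

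The key step is the log-partition gradient. Differentiating \eqref{eq:motivation-partition-function} under the (finite or suitably dominated) sum gives
\[
\nabla_\phi \log Z_\phi(x)=\frac{1}{Z_\phi(x)}\sum_{y'}\pi_{\mathrm{ref}}(y'\mid x)\exp\!\left(\tfrac{1}{\beta}R_\phi(x,y')\right)\tfrac{1}{\beta}\nabla_\phi R_\phi(x,y').
\]
The weights here are exactly $\pi_{\theta^*(\phi)}(y'\mid x)$, so this is
\[
\frac{1}{\beta}\E_{y'\sim\pi_{\theta^*(\phi)}(\cdot\mid x)}[\nabla_\phi R_\phi(x,y')].
\]
Expanding $R_\phi$ as the checklist average turns it into $\frac{1}{\beta K}\sum_k \E_{y'}[\nabla_\phi r_\phi(x,y',c_k)]$. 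This is the standard exponential-family identity, and it produces the negative policy-sample term.

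It then remains to take the expectation over $(x,y)\sim\mathcal{D}^+$ and exchange gradient and expectation. Since the $\log Z_\phi(x)$ term depends only on $x$, its expectation under $\mathcal{D}^+$ is an expectation over the $x$-marginal of $\mathcal{D}^+$. This is how $\E_x$ in \eqref{eq:decomposed-reward-gradient} should be read. Collecting the two pieces under the common factor $\frac{1}{\beta K}\sum_k$ yields the claim.

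I expect the main obstacle to be only a technical point: justifying the interchange of $\nabla_\phi$ with the sum over $y'$ and with the outer expectations. I would handle it by assuming a finite (or countable) response space with $r_\phi$ differentiable and bounded gradients, so dominated convergence applies. I would also note that $\theta^*(\phi)$ enters only through the closed Gibbs form, so no implicit differentiation of $\theta^*$ is needed.
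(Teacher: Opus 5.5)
Your proposal is correct and follows the paper's proof essentially step for step. Both take the logarithm of the Gibbs form, write $\nabla_\phi \log Z_\phi(x)$ as $\frac{1}{\beta}\E_{y'\sim\pi_{\theta^*(\phi)}(\cdot\mid x)}[\nabla_\phi R_\phi(x,y')]$, average over $\mathcal{D}^+$, and expand $R_\phi$ as the checklist mean; the paper simply does that expansion last rather than first. Your remarks on reading $\E_x$ as the $x$-marginal of $\mathcal{D}^+$ and on justifying the gradient--sum interchange make explicit points the paper leaves implicit.
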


\begin{proof}
Using Eq.~\ref{eq:motivation-gibbs-policy},
\[
\log \pi_{\theta^*(\phi)}(y\mid x)
=
\log \pi_{\mathrm{ref}}(y\mid x)
+\frac{1}{\beta}R_\phi(x,y)
-\log Z_\phi(x).
\]
Since $\pi_{\mathrm{ref}}$ does not depend on $\phi$,
\[
\nabla_\phi \log \pi_{\theta^*(\phi)}(y\mid x)
=
\frac{1}{\beta}\nabla_\phi R_\phi(x,y)
-
\nabla_\phi \log Z_\phi(x).
\]
From Eq.~\ref{eq:motivation-partition-function},
\[
\nabla_\phi Z_\phi(x)
=
\sum_{y'}
\pi_{\mathrm{ref}}(y'\mid x)
\exp\!\left(\frac{1}{\beta}R_\phi(x,y')\right)
\frac{1}{\beta}\nabla_\phi R_\phi(x,y').
\]
Dividing by $Z_\phi(x)$ gives
\[
\nabla_\phi \log Z_\phi(x)
=
\frac{1}{\beta}
\E_{y'\sim\pi_{\theta^*(\phi)}(\cdot\mid x)}
[\nabla_\phi R_\phi(x,y')].
\]
Therefore,
\[
\nabla_\phi \mathcal{L}_{\mathrm{gen}}(\phi)
=
\frac{1}{\beta}
\left(
\E_{(x,y)\sim\mathcal{D}^+}
[\nabla_\phi R_\phi(x,y)]
-
\E_x
\E_{y'\sim\pi_{\theta^*(\phi)}(\cdot\mid x)}
[\nabla_\phi R_\phi(x,y')]
\right).
\]
Substituting
\[
R_\phi(x,y)
=
\frac{1}{K}\sum_{k=1}^K r_\phi(x,y,c_k)
\]
and exchanging the finite sum with the expectations yields Eq.~\ref{eq:decomposed-reward-gradient}.
\end{proof}

The second term in Eq.~\ref{eq:decomposed-reward-gradient} comes from differentiating the partition function $Z_\phi(x)$.
It is an item-level policy-sample correction: under the reward-induced generator distribution, it pushes down reward assigned to generated responses.
The partition-style penalty in \softsverl{} is a selected approximation to this term.
The exact correction would require an expectation over prompts, policy samples, and all checklist items:
\begin{equation}
\label{eq:full-partition-correction}
\frac{1}{K}
\sum_{k=1}^K
\E_x
\E_{y'\sim \pi_{\theta^*(\phi)}(\cdot\mid x)}
[\nabla_\phi r_\phi(x,y',c_k)].
\end{equation}
In practice, \softsverl{} replaces $\pi_{\theta^*(\phi)}$ with the current generator $\pi_\theta$ and makes the expectation tractable by selecting a subset $\mathcal{S}$ of on-policy response-item contexts.
This gives the partition objective
\begin{equation}
\label{eq:partition-loss}
\mathcal{J}_{\mathrm{part}}(\phi)
:=
\E_{(x,y,c)\in\mathcal{S}}
\left[
r_\phi(x,y,c)
\right].
\end{equation}
The gradient $\nabla_\phi\mathcal{J}_{\mathrm{part}}(\phi)$ is therefore a selected finite-sample approximation to the policy-sample expectation in Eq.~\ref{eq:full-partition-correction}.
Since \softsverl{} subtracts $\mathcal{J}_{\mathrm{part}}$ in the shared objective, this term has the same sign as the negative policy-sample term in Eq.~\ref{eq:decomposed-reward-gradient}.
Thus, the partition loss can be viewed as a tractable, selected approximation to the log-partition correction.
The subset $\mathcal{S}$ matters because applying this correction to all on-policy items could also suppress genuine successes; \softsverl{} applies it only to selected contexts where Yes-rate inflation is the relevant failure mode.

The first term in Eq.~\ref{eq:decomposed-reward-gradient} increases item-level reward on target-good responses.
This term is written as if we had access to a distribution of target-good full responses.
In \softsverl{}, we do not assume such completions are available for every prompt.
Instead, we generate responses from the initial or current policy and label them at the checklist-item level, producing gold and replay tuples $(x,y,c,\ell)$.
We approximate the positive part of the term with items labeled $\ell=1$, and use items labeled $\ell=0$ to train the verifier for item-level correctness.

\end{document}